\newcommand{\bmf}[1]{\bm{\mathsf{#1}}}
\newcommand{\pitj}{\pi_j^t}
\newcommand{\piti}{\pi_i^t}
\newcommand{\orange}[1]{\textcolor{orange}{#1}}
\definecolor{cgpt}{HTML}{00E079}
\definecolor{cclaude}{HTML}{9B4400}
\newcommand{\bs}[1]{\boldsymbol{#1}}
\newcommand{\pitheta}{\pi_\theta}
\newcommand{\pithetat}{\pi_\theta^t}
\newcommand{\x}{\boldsymbol{x}}
\newcommand{\ex}{\boldsymbol{o}}
\newcommand{\y}{\boldsymbol{y}}
\newcommand{\e}{\boldsymbol{e}}
\newcommand{\act}{\boldsymbol{a}}
\newcommand{\deltapi}{\Delta\log\pi^t_\theta}
\newcommand{\ours}{\textsc{SELF}}
\theoremstyle{plain}
\newtheorem{theorem}{Theorem}[section]
\newtheorem{proposition}[theorem]{Proposition}
\newtheorem{lemma}[theorem]{Lemma}
\theoremstyle{definition}
\newtheorem{definition}[theorem]{Definition}
\theoremstyle{remark}
\title{\textsc{The Reasoning Boundary Paradox: How Reinforcement Learning Constrains Language Models}}
\author{%
\parbox{\linewidth}{\centering
Phuc Minh Nguyen$^{1,2}$, Chinh D. La$^1$, Duy M. H. Nguyen$^{3,5,6}$, Nitesh V. Chawla$^{2,4}$ \\
Binh T. Nguyen$^{1,2}$\thanks{Co-Corresponding Authors: \texttt{\href{mailto:binh.nt2@vinuni.edu.vn}{binh.nt2@vinuni.edu.vn}} \ \& \  \texttt{\href{mailto:khoa.dd@vinuni.edu.vn}{khoa.dd@vinuni.edu.vn}}}\;\;,\;  Khoa D. Doan$^{1,2}$\footnotemark[1]} 
\\
\parbox{\linewidth}{\centering
$^1$College of Engineering and Computer Science, VinUniversity\\
$^2$Center for Environmental Intelligence, VinUniversity \\
$^3$University of Stuttgart, 
$^4$University of Notre Dame\\
$^5$German Research Center for Artificial Intelligence (DFKI) \\
$^6$Max Planck Research School for Intelligent Systems (IMPRS-IS) \\
}
}
\date{May 2025}
\begin{document}
\maketitle
\begin{abstract}
Reinforcement Learning with Verifiable Rewards (RLVR) has emerged as a key method for improving Large Language Models' reasoning capabilities, yet recent evidence suggests it may paradoxically shrink the reasoning boundary rather than expand it. This paper investigates the shrinkage issue of RLVR by analyzing its learning dynamics and reveals two critical phenomena that explain this failure. First, we expose \textit{negative interference} in RLVR, where learning to solve certain training problems actively reduces the likelihood of correct solutions for others, leading to the decline of Pass@$k$ performance, or the probability of generating a correct solution within $k$ attempts. Second, we uncover the \textit{winner-take-all} phenomenon: RLVR disproportionately reinforces \textit{problems with high likelihood, correct solutions} under the base model while suppressing other initially low-likelihood ones.  Through extensive theoretical and empirical analysis on multiple mathematical reasoning benchmarks, we show that this effect arises from the inherent on-policy sampling in standard RL objectives, causing the model to converge toward narrow solution strategies. Based on these insights, we propose a simple yet effective data curation algorithm that focuses RLVR learning on low-likelihood problems, achieving notable improvement in Pass@$k$ performance. Our code is available at \url{https://github.com/mail-research/SELF-llm-interference}.
\end{abstract}

\section{Introduction}


Large Language Models (LLMs) have recently shown remarkable capabilities in complex logical tasks such as mathematical reasoning \citep{cobbe2021trainingverifierssolvemath, hendrycks2021math500} and programming \citep{jimenez2024swebench, yang2024sweagent}. A key factor behind this success is \textit{Reinforcement Learning with Verifiable Rewards} (RLVR,~\citealt{lambert2025tulu3,deepseek-r1_2025}). RLVR optimizes the LLMs against a binary signal based on objective correctness, eliminating the need for human annotations \citep{openai2024gpt4technicalreport,meta2024llama3.2}. 
It is believed that RLVR encourages the emergence of novel reasoning strategies, such as self-reflection and iterative refinement \citep{deepseek-r1_2025,zeng2025simplerlzooinvestigatingtamingzero,deepscaler2025}, enabling the LLMs to surpass the capabilities of their base models.

However, recent studies suggest that RLVR training does not expand the reasoning boundary beyond what the base model already possesses \citep{yue2025rlvr, zhao2025echochamber, zhu2025negativereinforcement, liu2025oatzero}. 
Notably, \citet{liu2025understanding,zhao2025echochamber} reveal that the base model already exhibits complex reasoning behaviors even before RLVR, while \citet{yue2025rlvr}, using the pass@$k$ metric (i.e., the probability of generating a correct solution within $k$ attempts), demonstrate that RLVR can even shrink the reasoning boundary, reducing the set of problems the model can solve within $k$ trials. Identifying why this coverage shrinkage occurs is crucial for understanding and effectively leveraging RLVR to solve novel problems. 


To investigate this issue, this paper analyzes the learning dynamics of RLVR training. 
Unlike standard RL, which is effective in learning novel strategies within a \textit{single and well-defined Markov Decision Process (MDP)}, in LLMs reasoning, each problem $\x$ induces its own MDP with a distinct and unknown reward function $r(\x, \cdot)$ \citep{setlur2025e3learningexploreenables,qu2025metarl}. Consequently, learning to solve a problem $\x$, defined by one MDP, can affect the ability of the LM to solve another problem $\x'$, defined by another MDP. 
\begin{figure}[t]
    \centering
    \includegraphics[width=1.0\linewidth]{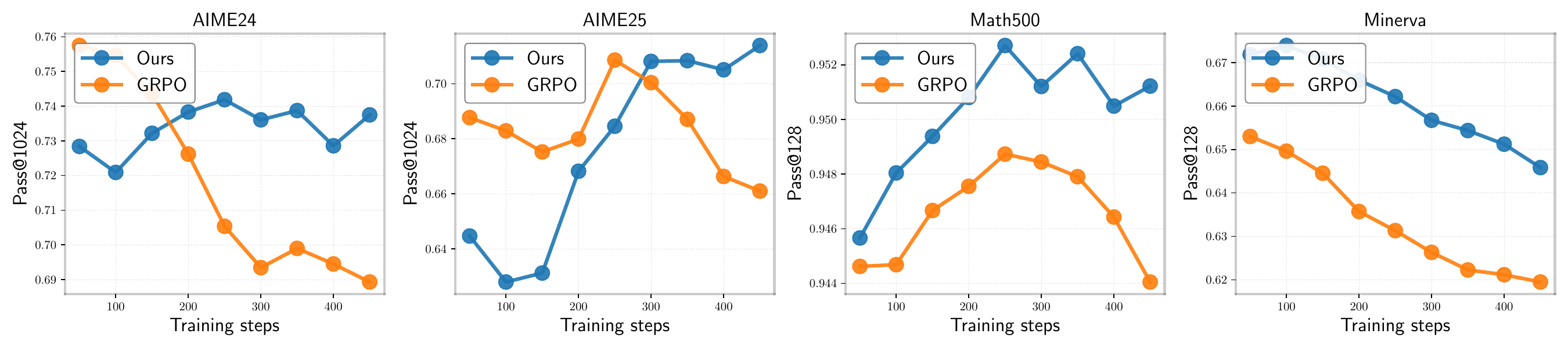}
    \caption{Pass@$k$ evolution (smoothed) during RLVR training with Qwen2.5-Math-1.5B, comparing our proposed finetuning objective SELF to GRPO under a large sampling budget $k$. While GRPO shows a progressive decline, \ours{} exhibits consistent improvements in Pass@$k$ throughout the training process.}
    \vspace{-0.1in}
    \label{fig:main_results}
\end{figure}
Indeed, our analysis reveals that RLVR is prone to a \textit{negative interference} effect where learning to solve a subset of training problems reduces the likelihood of generating correct solutions for others. 
This dynamic can lead to a \textit{winner-take-all} scenario \citep{schaul2019rayinterference}, where the LLMs disproportionately improve on a limited subset of problems that the base model can already solve with high likelihood while neglecting or even regressing on others, ultimately leading to a decline in Pass@$k$ performance or problem coverage. Furthermore, we highlight the failure of current regularization techniques, such as clipping \citep{schulman2017ppo} and KL regularization in preventing these effects in RLVR. Finally, these findings motivate us to propose \ours{}, a simple yet effective data curation algorithm that focuses model learning on a subset of problems with low likelihood, effectively improving the coverage performance across mathematical benchmarks. 
In summary, the main contributions of this work are as follows:
\begin{enumerate}[leftmargin=5.0mm]
    \item We study the learning dynamics of RLVR and reveal \textit{negative interference} in LLM reasoning, where learning to solve a subset of training problems results in a negative effect on others, leading to \textit{winner-take-all} in which the model only concentrates on solving a smaller set of problems.
    \item We reveal the \textit{winner-take-all} learning phenomenon, where RLVR tends to reinforce problems highly solvable under the base model while neglecting problems with initially low likelihood of correct solutions. Due to the existence of \textit{negative interference}, these problems with lower success rates will progressively have lower likelihoods of generating the correct solutions, and on-policy learning cannot provide any explicit learning signal to improve them. This explains the reduction in the diversity of previously learned behaviors during RLVR training or reasoning boundary shrinkage in RLVR.
    \item We propose \ours{} (Selective Examples with Low-likelihood and Forward-KL), a data curation algorithm that selectively solves only problems with low likelihood of arriving at correct answers while preserving previously learned behavior during RLVR. Our extensive empirical evaluation demonstrates that \ours{} not only improves sample efficiency but also effectively mitigates the coverage shrinkage problem in RLVR.
\end{enumerate}

\section{Related Works}

\paragraph{Reinforcement Learning with Verifiable Rewards (RLVR).} RLVR has emerged as a powerful method for improving LLMs in mathematical reasoning and programming \citep{deepseek-r1_2025}. However, recent evidence suggests that reasoning abilities already exist in base models, with RLVR only amplifying rather than creating such capabilities \citep{shao2025spuriousrewardsrethinkingtraining,yue2025rlvr,liu2025understanding}. Moreover, RLVR can degrade performance and reduce coverage of correct solutions, evidenced by decreased pass@$k$ metrics \citep{yue2025rlvr,dang25diversitycollapse,zhao2025echochamber,wu2025rlvrsupport}. \citet{proRL25} attributes coverage reduction to limited RL training and over-specialization during pretraining, while \citet{zhu2025negativereinforcement} shows that increasing the likelihood of correct solutions improves accuracy but reduces diversity.

\paragraph{Loss of Plasticity in Neural Networks.} Neural networks gradually lose adaptability, a phenomenon called \textit{plasticity loss} \citep{plasticitylosssurvey,juliani2024plasticity_on_policy}, particularly pronounced in RL due to its non-stationary nature \citep{plasticitylosssurvey,juliani2024plasticity_on_policy,moalla2024trust_issues}. Proposed remedies include resetting parameters \citep{soft_reset,kielo2024primacybias,d'oro2023sampleefficient_oral} and self-distillation \citep{igl2021self_distill}. In particular, \citet{tang2024improvingchurn,tang2025mitigatingplasticitychurn} show plasticity loss correlates with drastic model confidence changes (churn), which undermines regularization techniques like clipping \citep{moalla2024trust_issues}. Our work explains why RLVR reduces behavioral diversity in LLM reasoning, where non-stationarity arises from both shifting distributions and varying prompt-induced objectives.

\paragraph{Learning Dynamics.} \citet{ren2025learningdynamics} shows negative gradients in off-policy LLM finetuning create a \textit{squeezing effect}. In RL, \textit{interference} occurs when learning degrades performance on unseen or previously learned states \citep{liu2020practicalmeasureinterferencereinforcement,liu2023measuringmitigatinginterferencereinforcement}. \citet{schaul2019rayinterference} demonstrates a \textit{winner-take-all} effect \citep{schaul2019rayinterference,task_priritization} in contextual bandits, where policies excel in some contexts while regressing in others. Our findings reveal \textit{negative interference} as the key mechanism through which RLVR reduces solvable problems, with on-policy sampling causing disproportionate reinforcement of initially successful problems.

\section{Problem setting and Per-step Influence in RLVR}
\paragraph{Language Model.} We first consider a language model (LM) policy $\pi$. For a given prompt $\x$, the LM policy $\pi$ will generate a response $\y$ in an auto-regressive manner: $ \pi(\bs{y} |\bs x) = \prod_{t}\pi(y_t|\x,\y_{<t})$, where $y_t$ is the $t$-th indexed token in $\y$ and $\y_{<t}$ is the partial completion before $y_t$.

\paragraph{Reinforcement Learning with Verifiable Rewards (RLVR).} Given a verifiable reward function $r(\x,\y)\in\{0,1\}$ indicating whether a response $\y$ is correct $(\y^+)$ or incorrect $(\y^-)$, the goal of RLVR is to learn a new policy $\pitheta$, initialized from a base model (usually the pre-trained model) $\pi_b$, to generate responses maximizing the reward function with the following reward regularized maximization objective:
\begin{align}
\label{eq: rlvr_kl_obj}
    \max_{\pitheta} \mathbb E_{\x\sim\mathcal D,\y\sim\pitheta(\cdot|\x)}\left[r(\x,\y) -\beta\text{KL}\left(\pi_\theta(\cdot|\x) \ \|\ \pi_b(\cdot|\x)\right)\right],
\end{align}
where $\beta$ is the KL regularization parameter. Eq.~\eqref{eq: rlvr_kl_obj} represents a simplified version of common RL algorithms (PPO \citep{schulman2017ppo}, GRPO \citep{deepseek-r1_2025}). We provide a detailed derivation of the GRPO and PPO objectives in Appendix~\ref{app:policy_grad_derivation}.

\paragraph{Per-step Influence.} Learning dynamics describes how the change in the model parameters $\theta$, induced by problem $x$, influences or changes the log-likelihood $\log\pitheta$ of other problem-solution pairs $(\x',\y')$, which can be either training or test examples. Given an update step in LM parameters $\theta^t\rightarrow\theta^{t+1}$, the per-step influence measures the change in model confidence $\log\pitheta(\y'|\x')$  on $(\x',\y')$:
\begin{align}
    \deltapi(\y'|\x') = \log\pi_{\theta^{t+1}}(\y'|\x') - \log\pi_{\theta^t} (\y'|\x')
\end{align}
Using Taylor expansion, assuming with a sufficiently small learning rate $\eta$, we can decompose the per-step influence function as:
\begin{proposition}
\label{eq:prop1}
   Consider a problem-solutions pair $(\x,\y)$ for updating using policy gradient objective, a sufficiently small learning rate $\eta$, the per-step influence on a problem-solution pair $(\x',\y')$ is defined as:
    \begin{align}
        \deltapi(\y'|\x') = \eta \, \mathcal K^t(\x,\x', \y,\y') A(\x,\y)  +\mathcal O(\eta^2)
    \end{align}
where $\mathcal K^t(\x,\x',\y,\y')=\nabla_\theta\log\pi_{\theta^t}(\y|\x)^\top\nabla_\theta\log\pi_{\theta^t}(\y'|\x')$ measures the influence between $(\x,\y)$ and $(\x',\y')$, and the advantage function $A(\x,\y)$ determines the \textit{direction} of the model's update.
\end{proposition}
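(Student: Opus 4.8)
The plan is to obtain the identity from two ingredients: the closed form of a single policy-gradient step on $(\x,\y)$, and a first-order Taylor expansion of $\log\pitheta(\y'|\x')$ in $\theta$ around $\theta^t$. First I would make the update step explicit. By the policy gradient (REINFORCE) theorem applied to the objective in Eq.~\eqref{eq: rlvr_kl_obj}, its stochastic gradient evaluated on the sampled pair $(\x,\y)$ takes the form $A(\x,\y)\,\nabla_\theta\log\pi_{\theta^t}(\y|\x)$, where the scalar $A(\x,\y)$ absorbs the reward $r(\x,\y)$, the (group-normalized) baseline subtraction, and the contribution of the KL penalty; the detailed reduction from the PPO/GRPO objectives — including the ratio-clipping terms and the token-level decomposition $\nabla_\theta\log\pi_\theta(\y|\x)=\sum_t\nabla_\theta\log\pi_\theta(y_t|\x,\y_{<t})$ — is the content of Appendix~\ref{app:policy_grad_derivation}. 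A single gradient-ascent step with learning rate $\eta$ therefore gives
\begin{equation*}
\bs{\delta}\;:=\;\theta^{t+1}-\theta^t\;=\;\eta\,A(\x,\y)\,\nabla_\theta\log\pi_{\theta^t}(\y|\x),
\qquad \|\bs{\delta}\|=\mathcal O(\eta).
\end{equation*}

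Next I would expand $g(\theta):=\log\pitheta(\y'|\x')$. Taylor's theorem with a mean-value remainder yields, for some $\xi$ on the segment $[\theta^t,\theta^{t+1}]$,
\begin{equation*}
\deltapi(\y'|\x')
=g(\theta^t+\bs{\delta})-g(\theta^t)
=\nabla_\theta\log\pi_{\theta^t}(\y'|\x')\tp\bs{\delta}\;+\;\tfrac12\,\bs{\delta}\tp\nabla_\theta^2 g(\xi)\,\bs{\delta}.
\end{equation*}
Substituting the expression for $\bs{\delta}$, the linear term equals $\eta\,A(\x,\y)\,\nabla_\theta\log\pi_{\theta^t}(\y'|\x')\tp\nabla_\theta\log\pi_{\theta^t}(\y|\x)$, which is exactly $\eta\,\mathcal K^t(\x,\x',\y,\y')\,A(\x,\y)$ by the definition of $\mathcal K^t$ and symmetry of the Euclidean inner product. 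The remainder is $\mathcal O(\eta^2)$: $|\tfrac12\bs{\delta}\tp\nabla_\theta^2 g(\xi)\bs{\delta}|\le\tfrac12\|\nabla_\theta^2 g(\xi)\|_{\mathrm{op}}\|\bs{\delta}\|^2$, and assuming $\log\pitheta(\y'|\x')$ is twice continuously differentiable in $\theta$ with Hessian bounded on a neighborhood of $\theta^t$ (a mild regularity condition on the architecture), the operator norm there is bounded by a constant independent of $\eta$ once $\eta$ is small enough that $\xi$ stays in that neighborhood. Combining the two displays gives the stated decomposition.

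\textbf{Anticipated obstacle.} The Taylor step and the remainder bound are routine once local $C^2$-smoothness of $\log\pitheta$ is granted; the only genuinely delicate point is the first step — justifying that the one-step update is precisely $\eta\,A(\x,\y)\,\nabla_\theta\log\pi_{\theta^t}(\y|\x)$ for the concrete RLVR/GRPO objective, i.e.\ that the KL/entropy regularizer and the PPO ratio-clipping either vanish at $\theta^t$, are higher order in $\eta$, or can be folded into the scalar $A(\x,\y)$ without altering the $\nabla_\theta\log\pi_{\theta^t}(\y|\x)$ direction. That bookkeeping is exactly what Appendix~\ref{app:policy_grad_derivation} handles, and here it is invoked as a black box.
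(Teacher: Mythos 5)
Your proposal is correct and takes essentially the same route as the paper: both expand $\log\pi_\theta(\y'|\x')$ to first order in $\theta$ around $\theta^t$, substitute the one-step policy-gradient update $\theta^{t+1}-\theta^t=\eta\,A(\x,\y)\nabla_\theta\log\pi_{\theta^t}(\y|\x)$, and read off the inner-product kernel $\mathcal K^t$ from the linear term. You are slightly more careful than the paper at two points — you write the Taylor expansion directly for the primed pair $(\x',\y')$ (the paper's Eq.~\eqref{eq:taylor_approx} carries a typo with the unprimed pair on both sides), and you justify the $\mathcal O(\eta^2)$ remainder explicitly via a mean-value Hessian bound under a local $C^2$ assumption — but these are refinements of the same argument, not a different one.
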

The dot product between two gradients, $\mathcal K^t(\x,\x',\y,\y')$, can be interpreted as a model-specific similarity measure between problem–solution pairs, and $A(\x,\y)$ determines the \textit{magnitude} and \textit{direction} of the model update, indicating whether the gradient contributes positively or negatively.


\section{Learning dynamics of RLVR}
Calculating $\mathcal K^t(\x,\x',\y,\y')$ for LLMs is extremely expensive due to their large parameter counts. However, from Proposition~\ref{eq:prop1}, we have
\begin{align}
\frac{|\Delta\log\pi_{\theta}^t(\y'|\x')|}{ \eta \, C} \leq\, |\mathcal K^t(\x,\x',\y,\y')|,
\end{align}
where $C$ is a constant bounding the advantage function that holds due to finite binary reward values. This bound indicates that we can use the per-step influence $\Delta\log\pi_\theta^t(\y'|\x')$ to approximate $\mathcal K^t(\x,\x',\y,\y')$. Given this insight, we introduce two quantitative metrics designed to better characterize the properties of $\mathcal K(\x,\x',\y,\y')$ as follows.

\begin{definition}[\textbf{Interference in Language Model}]
Given a behavior policy $\mu$, we define the interference of a language model policy,  $\pi^t_\theta$, at iteration $t$ as:
\begin{align}
    \Delta^+(\pi_{\theta^t}, \mu) = \mathbb E_{\substack{\x'\sim\mathcal D,\\\y^+\sim\mu(\cdot|\x)}}\left[\deltapi(\y^+|\x')\right],
\end{align}
%
and the magnitude of the relative changes in model confidence on examples that lie outside of the training batch as:
\begin{equation*}
    ||\Delta(\pi_{\theta^t}, \mu)||=\mathbb E_{\substack{\x'\sim\mathcal D,\\\y\sim\mu(\cdot|\x)}}\left(\deltapi(\y|\x')\right)^2.
\end{equation*}
\end{definition}
Intuitively, $\Delta^+(\pi_{\theta^t}, \mu)$ measures the \textit{direction}
of the relative influence $\mathcal{K}(\x,\x',\y,\y^+)$ on other problem–correct solution pairs under the behavior policy $\mu$. A positive $\deltapi(\y^+|\x)$, indicates improvement, while a negative $\deltapi(\y^+|\x)$ corresponds to \textit{negative interference}, i.e., learning to solve a training problem $\x$ reduces the likelihood of producing other correct solutions. In our work, we consider $\mu$ to be the base model $\pi_b$ to study how RLVR affects the retention (or overwriting) of previously correct problem–solution pairs.

On the other hand, $||\Delta(\pi_{\theta^t}, \mu)||$ measures the \textit{squared magnitude} of the relative influence $\mathcal{K}(\x,\x',\y,\y)$ -- i.e., how drastic the model confidence on examples generated from $\mu$ changes after an update. If $||\Delta(\pi_{\theta^t},\pi_b)|| =0$, we can expect no coverage shrinkage to happen during RLVR training, as updating on training examples does not affect other non-training examples. An increase in the magnitude of influence $||\Delta(\pi_{\theta^t},\pi_b)||$ indicates that the language model struggles to distinguish between different problems, leading to highly correlated gradients across different examples. We provide a detailed explanation in Appendix~\ref{app:per_step_elaborate}.

\subsection{Experimental Setup} 

We focus on mathematical reasoning tasks to investigate the learning dynamics of RLVR. Specifically, we experiment on 2 series of models: Qwen2.5-Math-1.5B/7B \citep{yang2024qwen25math} and Llama-3.2-3B-Instruct \citep{meta2024llama3.2}. We consider DeepScaleR \citep{deepscaler2025} as the training dataset $\mathcal D$, which consists of approximately 40k mathematics problems collected from different sources. The detailed experimental setup is provided in Appendix \ref{app:exp_details}. We conduct experiments on 4 mathematical reasoning benchmarks: AIME24 \& AIME25 \citep{AoPS:AIMEProblemsSolutions}, Math500 \citep{hendrycks2021math500, lightman2024math500}, and Minerva \citep{lewkowycz2022minerva}, following existing works on LLMs reasoning \citep{yue2025rlvr, zhu2025negativereinforcement, li2025temporal}. Unless otherwise specified, we use $\mathcal D_{\text{test}}$ to denote the combined test set from all four benchmarks.

To analyze the model's learning dynamics during training, we create a probing dataset $\mathcal D_{\text{prob}}$ (based on the training prompts $\x\in\mathcal D$) using the base model. For each prompt $\x$, we generate 4 responses $\{\y_i\}_{i=1}^{4}$, to balance between computation and solution diversity.  We then rely on $\mathcal D_{\text{prob}}$ to calculate the effect of interference $\Delta^+$ and the relative change in model confidence on each update $||\Delta(\pitheta,\,\mu)||$ throughout the training process. We present our findings in the following section. 

\subsection{Negative Interference in RLVR Training of Language Models}

\paragraph{Decreasing Pass@$k$ as training progresses.} As shown in Fig.~\ref{fig:learning_dynamics}(A), RLVR training steadily improves the accuracy (Pass@$1$) of $\pi_\theta$, averaged across four benchmarks, as training progresses.

\begin{figure}[H]
    \centering
    \includegraphics[width=1.0\linewidth]{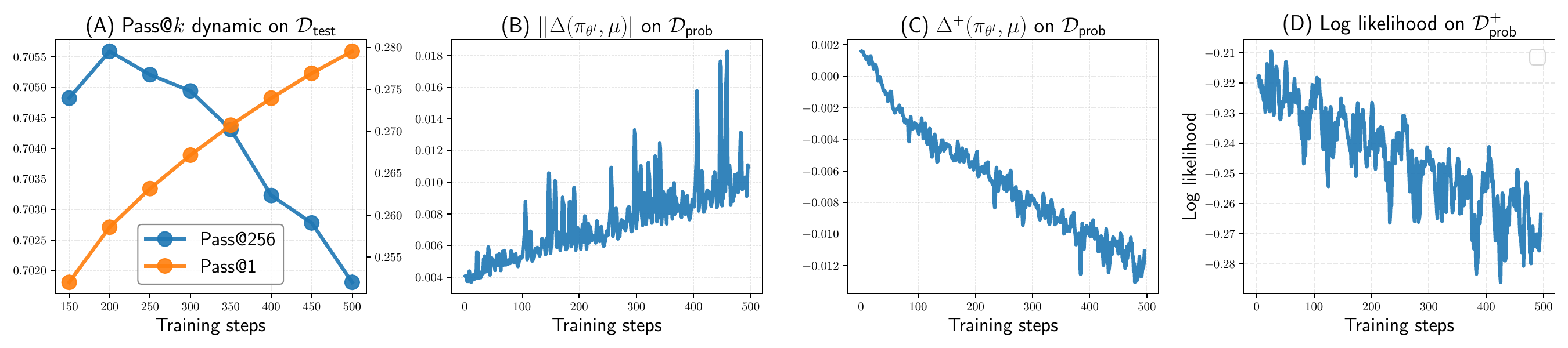}
    \caption{Learning dynamics of RLVR with key trends: (A) RLVR tends to improve average accuracy, but reduce the coverage of solvable solutions, as measured by Pass@256 (averaged across 4 test benchmarks); (B) the increasing effect of influence strength as training progresses; (C) the increase in \textit{negative interference}; and (D) the decline of model confidence on previously correct solutions.} 
    \label{fig:learning_dynamics}
\end{figure}
\vspace{-0.1in}
However, when evaluated with a larger sampling budget (e.g., Pass@256), RLVR displays a clear decline, as the performance drops below that of the base model $\pi_b$ after about 300 steps and continues to deteriorate towards the end of training. The results indicate that RLVR provides little to no improvement in Pass@$k$ at large $k$'s throughout training, a phenomenon consistent with recent findings in \citep{yue2025rlvr}.

\paragraph{The increase of influence on other examples in one training batch update.} We also observe an increase in $||\Delta(\pi_{\theta^t},\mu)||$ during training, as shown in Fig.~\ref{fig:learning_dynamics}(B). The growing strength of influence at each update step on outside examples suggests stronger similarity among gradients across samples \citep{tang2024improvingchurn, tang2025mitigatingplasticitychurn}, indicating that LLMs struggle to differentiate between distinct problems. As we demonstrate in Sec .~\ref {sec:on_policy_effect}, this effect can reduce the diversity of previously learned behaviors, where LLMs tend to concentrate on a single reasoning strategy across different problems, and also weaken the regularization effect in RLVR.

\paragraph{Increasing effect of negative interferences.} We continue to investigate the interference on the other problem-correct solution pairs in Fig.~\ref{fig:learning_dynamics}(C), and observe a decreasing trend in $\Delta^+(\pi_{\theta^t}, \mu)$; we also observe a similar decreasing trend in the log-likelihood of correct solutions $\log\pitheta(\y^+|\x)$ (Fig.~\ref{fig:learning_dynamics}(D)). These observations indicate that the learned policy increasingly reduces the likelihood of other problem-correct solution pairs generated by the base model.
We call this behavior the \textit{negative interference}, where learning to solve certain problems in $\mathcal{D}$ inadvertently hinders the performance on other problem–correct solution pairs.

Later in Sec.~\ref{sec:on_policy_effect}, we will show that, due to RLVR's on-policy sampling, highly solvable problems in the base model tend to capture most of the learning signal. With \textit{negative inference}, this subset of problems will increasingly dominate the learning process while ``weaker" ones are suppressed, ultimately constraining the policy to excel only in this subset of contexts while regressing in others. This also explains why RLVR improves average accuracy but reduces coverage. Under a small sampling budget $k$ (e.g., $k=1$), RLVR incentivizes these “winner” problems to reach higher average accuracy, but this comes at the cost of \textit{negative interference}, where the model’s confidence in correct solutions to other problems is degraded. Consequently, the set of problems solvable under a larger budget $k$ becomes narrower.


\paragraph{Interference as an indicator of decreasing Pass@$k$.} 
The previous observations suggest that the decline in Pass@$k$ performance is likely the result of \textit{negative interference} in RLVR learning dynamics. To confirm this hypothesis,  we uniformly sample two checkpoints $T$ and $T'$ with $T' > T$ during RLVR training. For each pair, we compute the interference $\Delta^+(\pi_{\theta^{T'}}, \pi_{\theta^T})$ under correct problem-solution pairs of $\mathcal{D}_{\text{test}}$ and the corresponding change in Pass@$k$, i.e., $\Delta\text{Pass}@k=\text{Pass}@k(\pi_{\theta^{T'}}) -\text{Pass}@k(\pi_{\theta^{T}})$ with $k=128$ (the value of $k$ is selected based on the related prior works \citep{yue2025rlvr, proRL25}). We then measure the correlation between these two quantities across four benchmarks. 

The correlation results are presented in Fig.~\ref{fig:corr_passk}. As expected, we observe a strong correlation between $\Delta^+(\pi_{\theta^{T'}}, \pi_{\theta^T})$ and the decrease in Pass@$k$, suggesting that interference is a key factor driving the degradation of Pass@$k$ performance during training.

\section{The effect of \textit{on-policy} in RLVR}
\label{sec:on_policy_effect}
In this section, we examine which problems RLVR likely prioritizes. 
First, we show that RLVR primarily incentivizes the language model $\pi_\theta$ to generate responses that already have high likelihood under the base model, regardless of their correctness. This arises from the nature of \textit{on-policy sampling}: for problems where the base model can produce multiple correct solutions, RLVR tends to reinforce the most probable one; but for those where the correct solutions lie in the ``valley'' regions of the distribution, RLVR, instead, reinforces the base model’s highest-confidence -- yet potentially incorrect -- responses, offering no meaningful learning signal to escape these incorrect modes. 

\begin{figure}[H]
    \centering
    \includegraphics[width=1.0\linewidth]{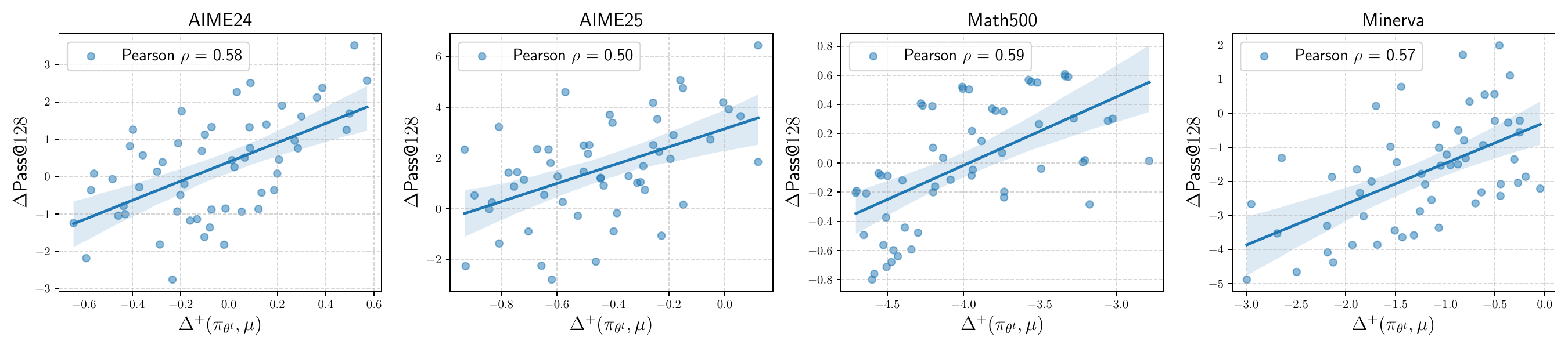}
    \caption{Interference as an indicator of Pass@$k$ decrease across various benchmarks.}
    \label{fig:corr_passk}
\end{figure}

Then, we highlight a \textit{winner-take-all} effect that leads to a reduction of previously learned behaviors, as models tend to propagate reasoning strategies from highly solvable problems to others. Finally, we show that existing regularization techniques fail to prevent this issue.


\textbf{Experimental Setup.} We consider a subset of problems from the Math500 dataset, denoted as $\mathcal D$; the experiments for the other datasets are provided in Appendix~\ref {app:ppl_detail}. We utilize the \textit{perplexity} metric \citep{speech_perplexity}:
\begin{align}
    \text{PPL}_\mu(\pi,\mathcal D) = \mathbb E_{\substack{\x\sim\mathcal D,\\ \y\sim\mu(\cdot|\x)}}\left[\exp\left(-\frac{1}{|\y|}\log\pi(\y|\x)\right)\right]
\end{align}
where $\mathcal{D}=\{\x_i\}_{i=1}^n$ is the set of prompts, the responses are generated by the reference distribution $\y\sim\mu(\cdot|\x)$ conditioned on $\x$ ($|\y|$ denotes the sequence length), and $\pi$ is the model whose perplexity is being evaluated. We also define the change in average accuracy between the base policy and the learned policy for each problem $\x$:
\begin{align}
    \Delta r(\x) = \mathbb E_{\y\sim\pitheta(\cdot|\x)}\left[r(\x,\y)\right] - \mathbb E_{\y\sim\pi_b(\cdot|\x)}\left[r(\x,\y)\right]
\end{align}
and then partition $\mathcal{D}$ into two subsets: $\mathcal{D}^{\downarrow}
$, containing problems where RLVR reduces the average accuracy compared to the base model (i.e., $\Delta r(\x) < 0$), and $\mathcal{D}^{\uparrow}$, containing problems where RLVR improves upon the base model (i.e., $\Delta r(\x) > 0$). We provide additional experimental details in Appendix~\ref{app:ppl_detail}.


\begin{figure}[!h]
    \centering
    \includegraphics[width=1.0\linewidth]{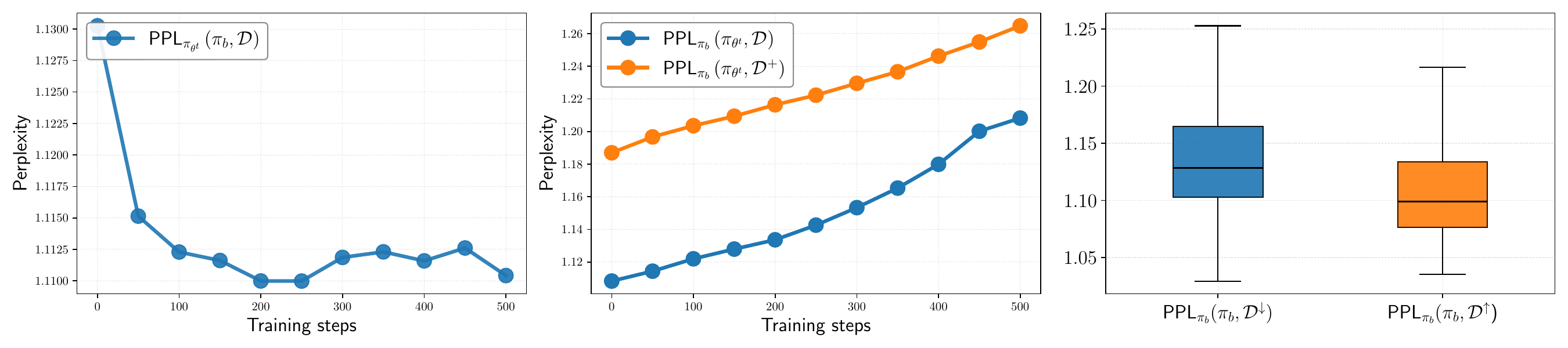}
    \caption{Perplexity during RLVR training. \textbf{Leftmost}: the data sampled from each intermediate checkpoint $\pi_{\theta^t}$ exhibits an increasingly high model confidence under $\pi_b$. \textbf{Middle}: RLVR models $\pi_{\theta^t}$ exhibit reduced confidence in data previously generated by the base model, regardless of their correctness. \textbf{Rightmost}: problems that RLVR improves already have a high likelihood of generating correct solutions under $\pi_b$, while coverage-reduced problems initially have a low likelihood of producing correct answers.}
    \label{fig:ppl_analysis}
\end{figure}

\textbf{RLVR tends to increase high likelihood regions under the base model.} Fig.~\ref{fig:ppl_analysis} shows that during training, the LM $\pi_\theta$ increasingly generates responses that already have high likelihood under the base model $\pi_b$ (\textbf{Leftmost}), while the initial problem-solution pairs sampled from $\pi_b$ increasingly have lower likelihood (or higher $\text{PPL}_{\pi_b}(\pi_{\theta_t},\mathcal D)$) in the RLVR policy $\pi_{\theta^t}$ (\textbf{Middle}). Together, these results indicate that the sampled responses from $\pi_{\theta^t}$ concentrate on the highest likelihood regions in the base model. We also observe in Fig.~\ref{fig:ppl_analysis} (\textbf{Middle}) that the problem-correct solution pairs sampled from $\pi_b$ have even lower model confidence (higher $\text{PPL}_{\pi_b}(\pi_{\theta_t},\mathcal D^+)$) and a similar decreasing trend, suggesting that for each problem, RLVR collapses into the solution with the highest likelihood under $\pi_b$ regardless of whether these high likelihood responses is correct or not. 
Finally, Fig.~\ref{fig:ppl_analysis} (\textbf{Rightmost}) demonstrates that, for problems where the base model already assigns high likelihood to correct solutions (lower $\text{PPL}(\pi_{\theta_t},\mathcal D^{\uparrow})$), RLVR reinforces these responses, leading to performance gains (or higher average accuracies). In contrast, when the correct responses initially lie in low-likelihood regions (higher $\text{PPL}(\pi_{\theta_t},\mathcal D^{\downarrow})$), RLVR tends to reduce its probability, reinforcing the base model's initial biases, resulting in degraded performance.

\begin{figure}[!h]
    \centering
    \includegraphics[width=0.8\linewidth]{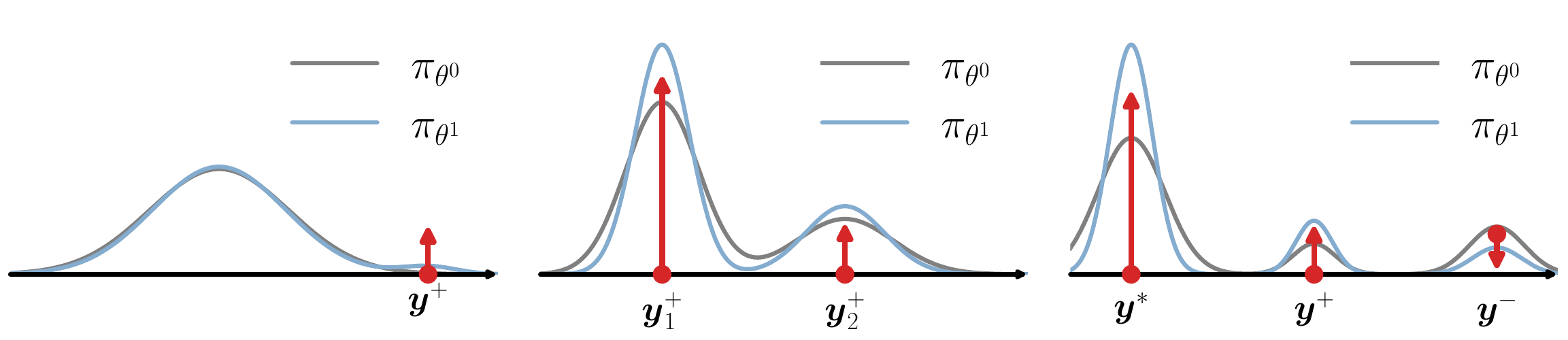}
    \caption{An illustration describing the dynamic of on-policy learning in Eq.~\eqref{eq:reinforce_grad}. \textbf{Leftmost}: correct response $\y^+$ in low-likelihood regions induce minimal effect. \textbf{Middle}: with multiple correct responses $\y_1^+$ and $\y_2^+$, updates favor the one with higher initial likelihood. \textbf{Rightmost}: negative gradients on incorrect response $\y^-$ can raise correct ones  $\y^+$, but greedy responses $\y^*$ increase the most.}
    \label{fig:rich-get-richer-illustration}
\end{figure}

\textbf{Why does  RLVR reinforce problems with high-likelihood correct solutions in the base model?}
To explain why this phenomenon happens, we first confirm that it occurs when \textit{on-policy sampling} is used for learning (the details are in Appendix~\ref {app:bandit}). More specifically, this phenomenon also appears even in a simple bandit problem with a Softmax policy with $V$ actions. We can look into the on-policy objective REINFORCE for an action $\y$:
\begin{align}
\label{eq:reinforce_grad}
    \nabla_\theta\mathcal L^{\text{REINFORCE}}(\pitheta) = A (\y)\nabla_\theta\pitheta(\y) = A(\y)\pitheta(\y)\nabla_\theta\log\pitheta(\y)
\end{align}
and arrive at the following insights (additionally, we utilize Fig.~\ref{fig:rich-get-richer-illustration} for illustration): 

\begin{itemize}[leftmargin=5mm]
\item \textit{Low-likelihood tokens provide less meaningful updates.} Eq.~\eqref{eq:reinforce_grad} suggests that low likelihood, correct responses receive infrequent and small updates because of the explicit \(\pi_\theta(\y)\) scaling factor and the fact that they are rarely sampled (Fig.~\ref{fig:rich-get-richer-illustration} \textbf{Leftmost}), while high-likelihood tokens dominate the learning process. Moreover, if \(\pi_\theta(\y)=0\), the gradient vanishes, creating an additional saddle point where zero-probability tokens cannot be updated.

\item \textit{When there are multiple optimal actions $\y^+_1$ and $\y^+_2$, the one with with the higher likelihood $\pi_\theta(\cdot)$ will dominate.} Due to the scaling of the model probability in Eq.~\ref{eq:reinforce_grad}, the update will tilt probability toward the already dominant correct mode, as depicted in Fig.~\ref{fig:rich-get-richer-illustration} (\textbf{Middle}). This effect will exacerbate when the gap between the two probabilities increases.

\item \textit{Negative gradients tend to reinforce high-likelihood tokens, regardless of their correctness.} On-policy learning with negative gradient exhibits similar behavior to off-policy with negative gradient. When the correct solution resides in a ``valley'' of the distribution, negative gradient tends to increase high-likelihood tokens regardless of correctness  (Fig.~\ref{fig:rich-get-richer-illustration} \textbf{Rightmost}), further diminishing the low-likelihood (but correct) ones (especially when the distribution is peaky). This is also observed in \citet{ren2025learningdynamics}.

\item \textit{If $\y^+$ is unlikely to be sampled under the $k$-sampling budget $\pitheta$, RLVR provides no learning signal.} This arises from the advantage calculation, where $A(\y)=0$ if all sampled actions are incorrect, resulting in no update.
\end{itemize} 

We provide additional theoretical and empirical details of this effect by analyzing the ratio of model predictions $\pi_{\theta^{t+1}}/\pi_{\theta^t}$ in Appendix \ref{app:bandit}. 

\textbf{On-policy learning and negative interference lead to \textit{Winner-take-all}.} The above analysis suggests that highly solvable problems dominate the learning signal, while problems with lower success rates contribute less due to gradient vanishing. Learning primarily on these highly solvable problems also increases the chance of \textit{negative interference} on problems with lower success rates, and eventually exacerbates the effect of \textit{winner-take-all}; as training progresses, these problems with lower success rates will progressively receive less explicit learning signal and have lower likelihoods of sampling correct solutions.

\begin{wrapfigure}{r}{0.48\textwidth}
    \centering\includegraphics[width=1.0\linewidth]{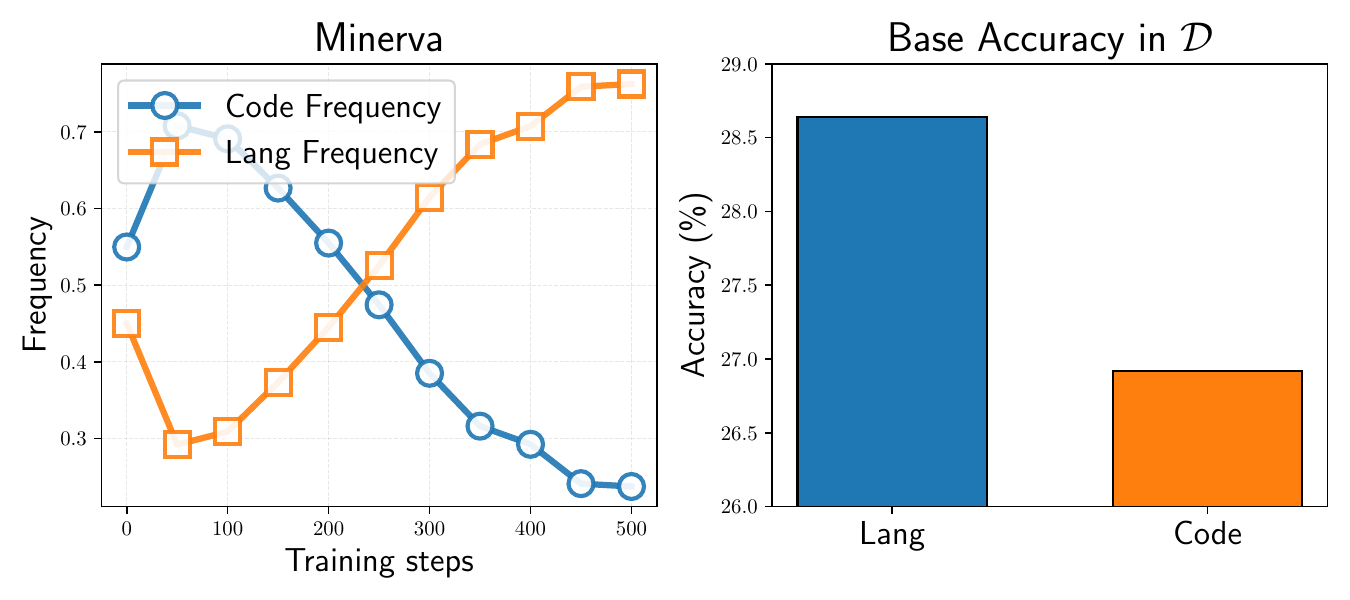}
    \caption{The LM switches from \textit{code reasoning} to \textit{language reasoning} in the Minerva benchmark, due to the reduction of diversity of previously learned behaviors.}
    \label{fig:behavior_reduce}
    \vspace*{-10pt}
\end{wrapfigure}
\textbf{\textit{Winner-take-all} explains the coverage shrinkage and performance degradation in Minerva Benchmark.}
\citet{shao2025spuriousrewardsrethinkingtraining} report that the Qwen2.5-Math model family employs two distinct reasoning modes: code-based and natural-language reasoning. 
However, during RLVR training, we observe a progressive collapse to natural language reasoning. 
As shown in Fig.~\ref{fig:behavior_reduce} (\textbf{Left}), the frequency of the generated solutions using code reasoning steadily decreases over training, while the frequency of natural language reasoning dominates at the end of training.

We observe that natural language reasoning has a better initial accuracy in the training dataset $\mathcal D$ (Fig.~\ref{fig:behavior_reduce} (\textbf{Right})). 
As RLVR training progresses,  the LM tends to concentrate on this subset of problems that are already highly solvable under the base model. \textit{Winner-take-all} suggests that RLVR amplifies the highest-likelihood solutions while suppressing other correct ones, leading to reduced diversity among these solvable problems and a collapse to natural language reasoning. As the influence strength $||\Delta(\pi_{\theta^t}, \mu)||$ grows (as shown in Fig.~\ref{fig:learning_dynamics}), this loss of diversity and the resulting bias towards high-probability responses under base model $\pi_b$ becomes amplified propagate, affecting other problem–solution pairs, including the test problems. Consequently, in cases where the base model’s initial bias is incorrect, this self-bias amplification will result in coverage shrinkage and \textit{negative interference}. This explains that on the Minerva benchmark (the test set), while code reasoning achieves higher initial accuracy, as RLVR training gradually switches to language reasoning, the performance on code reasoning problems drops, as observed in Fig.~\ref{fig:main_results}.

\begin{wrapfigure}{r}{0.44\textwidth}
   \vspace{-10pt}
    \centering\includegraphics[width=1.0
    \linewidth]{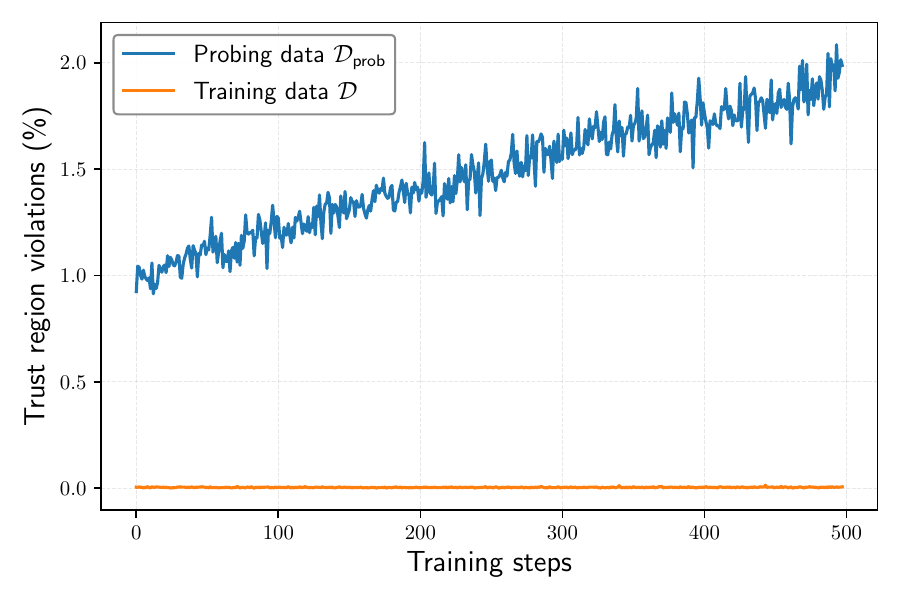}
\caption{Clipping violations on training data and probing data.}
    \label{fig:trust_region}
    \vspace*{-10pt}
\end{wrapfigure}


\textbf{Existing regularizations fail to mitigate diversity shrinkage.} In RLVR training, trust region constraints such as clipping aim to prevent the learned policy from deviating too far from the previously updated policy. As shown in Fig.~\ref{fig:trust_region}, the average probability ratio $\rho =\pitheta/\pi_{\text{old}}$ that violates the trust region constraint (lies outside the clipping range $(1-\epsilon, 1+\epsilon)$ ($\epsilon=0.2$)) remains stable in the training batch. However, under the probing dataset $\mathcal D_{\text{prob}}$, this violation steadily grows as training progresses. 
This suggests that the clipping mechanism has little influence on data outside the training batch, thus failing to prevent the \textit{increasing negative interference} throughout the training process.
Another regularizer, the Reverse KL, is also ineffective, as Reverse KL is known to exhibit mode-seeking behavior, where it has little to no penalty on regions where $\pi_\theta$ puts low probability mass. Furthermore, Reverse KL often favors collapsing probability mass onto high-likelihood mode under the base model $\pi_b$. 
\section{A data curation technique to mitigate winner-take-all effect.}

Building on the above analysis, we propose a novel and effective algorithm that focuses \emph{learning only on problems where the greedy response fails}. This acts as a proxy metric to exclude highly solvable problems from learning, thus preventing them from dominating the learning signal. To further preserve the diversity of learned behaviors, we replace the Reverse KL regularization with a Forward KL (SFT loss) objective, which penalizes the model when it begins to forget previously learned behaviors. We refer to this method as \ours{} (\textbf{S}elective \textbf{E}xamples with \textbf{L}ow-likelihood and \textbf{F}orward-KL):
\begin{align}
    \mathcal J(\pitheta) = \mathbb E_{\x\sim\mathcal{D}, (\y^*, \y)\sim\pitheta(\cdot|\x)}\left[\mathbf{1}\{r(\x,\y^*)\in\mathcal C(\x)\}r(\x,\y)-\beta\text{KL}(\pi_b||\pitheta)\right]
\end{align}
where $\y^*$ is the greedy response and $\mathcal C(\x)=\{\y|r(\x,\y)=1\}$ is the set of correct completions given a problem $\x$.

\begin{figure}[!h]
    \centering
    \includegraphics[width=1.0\linewidth]{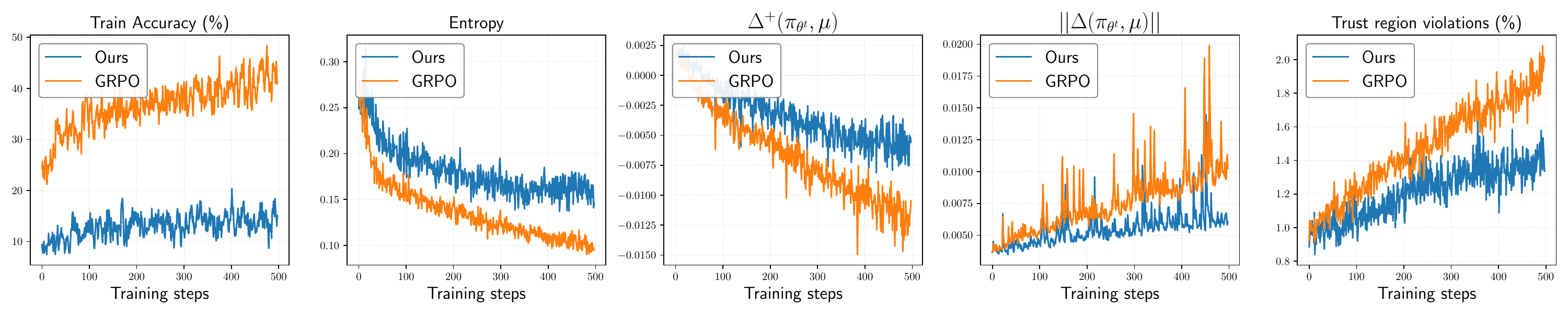}
    \caption{Learning dynamics of GRPO and the proposed method, with key observations: 1) \ours{} focus on learning problems with low success rate, 2) \ours{} exhibits better diversity, 3) \ours{} mitigates the effect of \textit{negative interference} and reduce the influence strength to other examples, result in a strong regularization effect.}
    \label{fig:dynamics_method}
\end{figure}

 We compare \ours{} learning dynamics against the standard GRPO objective in Fig.~\ref{fig:dynamics_method}. Since \ours{} focuses on problems with a low likelihood of being solved correctly, it achieves a lower average training reward than GRPO. However, it preserves better diversity throughout training, as measured by higher token-level entropy across the training process. In addition, \ours{} effectively mitigates the \textit{negative interference effect} as we observe a significant improvement in $\Delta^+(\pitheta,\pi_b)$ and a smaller magnitude of influence $||\Delta(\pitheta,\mu)||$ on other examples. Furthermore, \ours{} reduces the extent of trust region violations for examples outside the training batch.

\setlength{\tabcolsep}{0.9pt}
\begin{table*}[!h]
    \centering
    \caption{Evaluation results on various mathematical benchmarks (\%). The best performance under each dataset is marked with \textbf{boldface}.}
     \resizebox{\textwidth}{!}{%
    \begin{tabular}{l*{10}{c}}
    \toprule[1pt]
    \multirow{2}{*}{\textbf{Model}}   
    & \multirow{2}{*}{\textbf{Method}}  
    & \multicolumn{2}{c}{\textbf{AIME 2024}} 
    & \multicolumn{2}{c}{\textbf{AIME 2025}} 
    & \multicolumn{2}{c}{\textbf{Math500}} 
    & \multicolumn{2}{c}{\textbf{Minerva}} 
    \\
    &
    & Pass@1 & Pass@1024 
    & Pass@1 & Pass@1024
    & Pass@1 & Pass@256
    & Pass@1 & Pass@256
    \\
    \midrule
    \multirow{3}{*}{\hspace{5pt}{{\centering Qwen2.5-Math-1.5B}}\hspace{5pt}}
    & {Base model} & 9.73 & 75.86 & 4.45 & 67.80 & 58.83 & 95.58 & 16.82 & \textbf{70.67}
    \\
    & GRPO 
    & \textbf{15.16}
    & 67.77
    & \textbf{8.50}
    & 64.05 
    & \textbf{72.67}
    & 95.0 
    & 23.96 
    & 65.07
    \\
    \cmidrule(lr){2-10}
    & \ours{}
    &  14.33 
    & \textbf{80.94}
    & 8.05 
    & \textbf{69.45}
    & 71.0 
    & \textbf{96.0}
    & 23.14  
    & 68.75
    \\
    \midrule[0.8pt]
    \multirow{3}{*}{\hspace{5pt}{{\centering Qwen2.5-Math-7B}}\hspace{5pt}}
    & {Base model} & 16.94 & 81.83 &  6.97 & 67.57
    & 59.71 & 96.0 
    & 11.60 & 66.91
    \\
    & GRPO
    & \textbf{27.95}
    & 84.77
    & 14.13
    & 58.60
    & \textbf{80.36}
    & 92.8
    & 29.40
    & 60.66
    \\
    \cmidrule(lr){2-10}
    & \ours{}
    & 25.81
    & \textbf{89.62} 
    & \textbf{14.62}
    & \textbf{72.0}
    & 79.82 
    & \textbf{97.80}
    & \textbf{30.42} 
    & \textbf{71.70}
    \\
    \midrule[0.8pt]
    \multirow{3}{*}{\hspace{5pt}{{\centering Llama-3.2-3B-Instruct}}\hspace{5pt}}
    & {Base model} & 3.53 & \textbf{53.7} &  0.32 & 48.22
    & 26.61 & \textbf{91.4}
    & 8.25 & 49.26
    \\
    & GRPO
    & 11.76
    & 49.25
    & 0.38
    & 25.28
    & \textbf{54.06}
    & 88.2
    & 15.81 
    & 50.73
    \\
    \cmidrule(lr){2-10}
    & \ours{}
    & \textbf{13.30}
    & 51.69
    & \textbf{0.65}
    & \textbf{51.83}
    & 52.12 
    & 91.0
    & \textbf{ 17.0}
    & \textbf{57.35}
    \\
    \bottomrule[1pt]
\end{tabular}}
\vspace{-0.05in}
\label{tab:main_results}
\end{table*}

Table~\ref{tab:main_results} shows the evaluation of \ours{} and the baselines on various mathematical benchmarks. \ours{} achieves comparable Pass@1 performance compared to GRPO, but consistently better performance for larger values of $k$. While GRPO achieves good Pass@1 performance, it also reinforces problems that already have high success rates, leading to narrower coverage at larger $k$. Interestingly, larger models tend to suffer from even more severe coverage shrinkage than smaller models, despite achieving higher Pass@1. In contrast, \ours{} scales more effectively, outperforming GRPO on Pass@$1$ in some experiments while also frequently surpassing the base model under larger $k$ budgets. We also provide details of the computational cost of \ours{} in Appendix~\ref{app:computational_cost}, where \ours{} demonstrates a negligible computation difference to GRPO.

\section{Conclusion}
We investigate why RLVR exhibits coverage shrinkage of solvable problems, indicated by reduced Pass@$k$ performance, compared to the base model, by analyzing its learning dynamics. We identify a detrimental effect, which we call \textit{negative interference}, where learning on a subset of training problems can hinder performance on others. This phenomenon serves as a key indicator of the performance decline in larger $k$ budgets. Furthermore, we find that RLVR tends to concentrate learning on highly solvable problems, leading to reduced diversity and amplification of self-bias toward both previously solvable and unseen problems. Motivated by these findings, we propose \ours{}, a simple yet effective algorithm that only learn on problems with low success rates, effectively mitigating the coverage shrinkage problem.
\bibliography{iclr2026_conference}
\bibliographystyle{iclr2026_conference}

\newpage
\appendix

\section{Experimental Details}
\label{app:exp_details}
\subsection{Training details}
\textbf{Models.} We utilize two family of models for RLVR training: Qwen2.5-Math-1.5B and Qwen2.5-Math-7B \citep{yang2024qwen25math}; and Llama-3.2-3B-Instruct \citep{meta2024llama3.2}.

\textbf{Training configurations.} We utilize the VERL framework \citep{sheng2024verl} for RLVR training. We train each model on 4 GPUs with a constant learning rate of $10^{-6}$, the maximum response length is 3072 for Qwen2.5-Math models, and 8192 maximum generation tokens for Llama-3.2-3B-Instruct. We use a
mini batch size and a rollout batch size of 64. For each prompt, we collect 8 rollouts to
compute advantages for the GRPO update. We use a sampling temperature $\tau=1$. We do not apply Reverse KL
divergence or entropy loss in our training,  and the total training steps are 500, similar to prior works \citep{shao2025spuriousrewardsrethinkingtraining, dapo25, deepscaler2025}. We set the hyperparameter controlling the trade-off between reward maximization and Forward KL regularization to $\beta = 10^{-4}$.

\textbf{Reward function.} For the reward function $r(\x,\y)$ calculation,  we follow the implementation of \citep{zeng2025simplerlzooinvestigatingtamingzero, hu2024openrlhf} with the following rule for faster convergence:
\begin{itemize}[leftmargin=5mm]
    \item If the response provides a final answer in the \texttt{boxed{}} format and is correct, it receives a reward of $+1$.
    \item If the response provides a final answer in the \texttt{boxed} format, but it is incorrect, we set the reward as $-0.5$.
    \item Otherwise, the reward is set to $-1$ for incorrect answers.
\end{itemize}
\subsection{Probing dataset construction}
One major problem when analyzing learning dynamics is the huge response space $\mathcal{Y}$: the number of possible responses $\y\in\mathcal V^L$, not only includes natural language but also non-sensical texts. In this paper, we are interested in the response region from the base model $\pi_b$, due to the following reason: the concentration coefficient $C(\pitheta,\pi_b)$ is small, since RLVR tends to reduce the coverage compared to the base model, we can expect that base model distribution can provide coverage of the learned policy $\pitheta$.

To create the probing dataset $\mathcal D_{\text{prob}}$, for each problem in the training dataset $\x\in\mathcal D$, we sample $k=4$ responses from the base model with temperature $\tau=0.9$ to ensure diversity and accuracy.
\subsection{Prompt Template}
We use the prompt templates for Qwen2.5-Math models and Llama-3.2-3B-Instruct following \citep{yang2024qwen25math,deepscaler2025}, as shown in Table~\ref{tab:prompt-qwen2.5} and Table~\ref{tab:prompt-llama}.
\newtcolorbox[list inside=prompt,auto counter]{prompt}[1][]{
    colbacktitle=black!60,
    coltitle=white,
    boxsep=5pt,
    left=2pt,
    right=2pt,
    top=2pt,
    bottom=2pt,
    boxrule=1pt,
    #1,
}
\begin{table*}[!h]
\caption{Prompt template for Qwen2.5-Math.}
\label{tab:prompt-qwen2.5}
\begin{prompt}[title={}, label=]
\texttt{<|im\_start|>system}
\\
Please reason step by step, and put your final answer within \textbackslash boxed\{\}.\texttt{<|im\_end|>}
\\
\texttt{<|im\_start|>user}
\\
\{input\}
\\
Let's think step by step and output the final answer within \textbackslash boxed\{\}.\texttt{<|im\_end|>}
\\
\texttt{<|im\_start|>assistant}
\end{prompt}
\end{table*}
\newpage
\begin{table*}[!h]
\caption{Prompt template for Llama-3.2-3B-Instruct.}
\label{tab:prompt-llama}
\begin{prompt}[title={}, label=]
\texttt{<|begin\_of\_text|><|start\_header\_id|>system<|end\_header\_id|>}
\\
Cutting Knowledge Date: December 2023
\\
\texttt{<|start\_header\_id|>user<|end\_header\_id|>}
\\
\{input\}
\\
Let's think step by step and output the final answer within \textbackslash boxed\{\}.\texttt{<|eot\_id|>}
\\
\texttt{<|eot\_id|><|start\_header\_id|>assistant<|end\_header\_id|>}
\end{prompt}
\end{table*}
\subsection{Evaluation Details} 
Across all checkpoints, we evaluate using a temperature $\tau=0.6$ and a top-$p$ value of 0.95, following prior works \citep{yue2025rlvr,zeng2025simplerlzooinvestigatingtamingzero,lewkowycz2022minerva}, with a maximum generation of 3072 tokens for Qwen2.5-Math due to limited context length. For Llama-3.2-3B-Instruct, we allow the model to generate a maximum of 16384 tokens.
\section{Details Derivation  of GRPO}
\label{app:policy_grad_derivation}
Given a problem $\x$ and the generated solutions $\{\y\}_{i=1}^G$, where $G >2$ is the number of generated solutions per prompt. We define each token index $t$ in the response $\y$ as $y_t$ with $1\leq t\leq |\y|$, where $|\y|$ is the sequence length of the reasoning traces. Specifically, for each problem $\x$, GRPO objective \citet{deepseek-r1_2025} will first calculate group-wise normalized advantage function:
\begin{align}
    A(\x,\y_i) = \frac{r(\x,\y_i) - \hat\mu}{\hat\sigma}
\end{align}
where $\hat\mu = \frac{1}{G}\sum\limits_{j=1}^Gr(\x,\y_j)$, and $\hat\sigma = \sqrt{\frac{1}{G-1}\sum\limits_{j=1}^G(r(\x,\y_j)-\hat\mu)^2}$.

Let $\pi_{\text{old}}$ denote the policy from the previous update step that generates the reasoning traces. The probability ratio $\rho_t = \frac{\pitheta(y_t|\x,\y_{<t})}{\pi_{\text{old}}(y_t|\x,\y_{<t})}$ at each token index $t$, along with PPO clipping mechanism to serves as a proxy of trust region contraint with clipping threshold $\epsilon$, the GRPO objective is defined as:
\begin{align}
    J(\theta) = \mathbb{E}_{\x \sim \mathcal{D}, \, \y \sim \pi_{\text{old}}(\cdot|x)}
\Bigg[ \sum_{t=1}^{|y|} 
\min \Big( \rho_t(y;\theta) \, A(\x,\y_i), \;
\text{clip}\big(\rho_t(y;\theta), 1-\epsilon, 1+\epsilon \big) A(\x,\y_i) \Big) \Bigg] \nonumber\\
- \beta \, \mathbb{E}_{\x \sim \mathcal{D}} 
\Big[ \mathrm{KL}\big(\pi_\theta(\cdot|\x) \,\|\, \pi_b(\cdot|\x)\big) \Big].
\end{align}

The KL regularization helps the learned policy not deviate too far from the base model. Recently, \citet{dapo25} suggests that removing KL regularization can provide better performance. Following this, we set $\beta=0$ to eliminate KL regularization in our work.
\section{Per-step Influence Proofs and Further Analysis}
\label{app:per_step_elaborate}
\subsection{Proof of Proposition \ref{eq:prop1}}
 \begin{proposition}
    Consider a problem-solutions pair $(\x,\y)$ for updating using policy gradient objective, a sufficiently small learning rate $\eta$, the per-step influence on a problem-solution pair $(\x',\y')$ is defined as:
    \begin{align}
        \deltapi(\y'|\x') = \eta \, \mathcal K^t(\x,\x', \y,\y') A(\x,\y)  +\mathcal O(\eta^2)
    \end{align}
where $\mathcal K^t(\x,\x',\y,\y)=\nabla_\theta\log\pi_{\theta^t}(\y|\x)^\top\nabla_\theta\log\pi_{\theta^t}(\y'|\x')$ measures the influence between $(\x,\y)$ and $(\x',\y')$, and the advantage function $A(\x,\y)$ determines the \textit{magnitude} and \textit{direction} of the model's update.
\end{proposition}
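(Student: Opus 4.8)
The plan is to perform a first-order Taylor expansion of $\log\pi_{\theta^{t+1}}(\y'|\x')$ around $\theta^t$, treating the parameter update as a small perturbation controlled by the learning rate $\eta$. First I would write the policy-gradient update rule explicitly: for a sampled pair $(\x,\y)$, the update is $\theta^{t+1} = \theta^t + \eta\, A(\x,\y)\,\nabla_\theta\log\pi_{\theta^t}(\y|\x)$, which is the standard REINFORCE-style ascent step on the objective in Eq.~\eqref{eq: rlvr_kl_obj} (the KL term contributes only to the direction encoded in $A$, or can be folded into the advantage; for the simplified statement we take the update direction to be $A(\x,\y)\nabla_\theta\log\pi_{\theta^t}(\y|\x)$). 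This gives $\Delta\theta^t := \theta^{t+1}-\theta^t = \eta\,A(\x,\y)\,\nabla_\theta\log\pi_{\theta^t}(\y|\x)$, an $\mathcal O(\eta)$ quantity.

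Next I would Taylor-expand the scalar function $\theta\mapsto\log\pi_\theta(\y'|\x')$ about $\theta^t$ along the displacement $\Delta\theta^t$:
\begin{align}
\log\pi_{\theta^{t+1}}(\y'|\x') = \log\pi_{\theta^t}(\y'|\x') + \nabla_\theta\log\pi_{\theta^t}(\y'|\x')^\top \Delta\theta^t + \mathcal O(\|\Delta\theta^t\|^2).
\end{align}
Substituting $\Delta\theta^t = \eta\,A(\x,\y)\,\nabla_\theta\log\pi_{\theta^t}(\y|\x)$ into the linear term yields
\begin{align}
\deltapi(\y'|\x') = \eta\, A(\x,\y)\,\nabla_\theta\log\pi_{\theta^t}(\y'|\x')^\top\nabla_\theta\log\pi_{\theta^t}(\y|\x) + \mathcal O(\eta^2),
\end{align}
and recognizing the inner product as $\mathcal K^t(\x,\x',\y,\y')$ completes the identity, since $\|\Delta\theta^t\|^2 = \eta^2 A(\x,\y)^2\|\nabla_\theta\log\pi_{\theta^t}(\y|\x)\|^2 = \mathcal O(\eta^2)$ given that $A$ is bounded (finite binary rewards) and the gradient norm is finite.

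The main obstacle is not the algebra but justifying the remainder bound rigorously: the $\mathcal O(\eta^2)$ claim requires that $\log\pi_\theta(\y'|\x')$ be twice continuously differentiable in a neighborhood of $\theta^t$ with a Hessian bounded uniformly along the update segment, and that the per-step gradient $\nabla_\theta\log\pi_{\theta^t}(\y|\x)$ have bounded norm. I would state these as mild regularity assumptions (smoothness of the softmax-parameterized LM, which holds for standard architectures) and invoke Taylor's theorem with Lagrange remainder on the one-dimensional restriction $s\mapsto\log\pi_{\theta^t + s\Delta\theta^t}(\y'|\x')$, $s\in[0,1]$, so that the remainder is $\tfrac12 (\Delta\theta^t)^\top\nabla^2_\theta\log\pi_{\tilde\theta}(\y'|\x')(\Delta\theta^t)$ for some $\tilde\theta$ on the segment, which is $\mathcal O(\eta^2)$ under the stated bounds. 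A secondary subtlety worth a sentence is the precise form of the update when the full RLVR objective (expectation over $\y\sim\pi_\theta$, plus KL) is used rather than a single-sample estimator; I would note that the decomposition holds per sampled pair and that the advantage $A(\x,\y)$ absorbs both the reward and the KL-gradient contribution, with the group-normalized GRPO form deferred to Appendix~\ref{app:policy_grad_derivation}.
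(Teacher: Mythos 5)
Your proof is correct and follows essentially the same route as the paper's: write the policy-gradient SGD step $\theta^{t+1}=\theta^t+\eta A(\x,\y)\nabla_\theta\log\pi_{\theta^t}(\y|\x)$, first-order Taylor expand $\log\pi_\theta(\y'|\x')$ around $\theta^t$, and identify the inner product as $\mathcal K^t$. The only difference is that you add a (welcome but not strictly needed) discussion of the regularity conditions that justify the $\mathcal O(\eta^2)$ remainder and a remark on folding the KL contribution into $A$, neither of which the paper makes explicit.
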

\begin{proof}
    We first approximate $\log\pi_{\theta^{t+1}}(\y|\x)$ using Taylor expansion. For simplicity, we denote $\pi_{\theta^t} =\pithetat$:
    \begin{align}
    \label{eq:taylor_approx}
        \log\pi_{\theta}^{t+1}(\y|\x) = \log\pithetat(\y|\x) + \nabla_\theta\log\pithetat(\y|\x)^\top(\theta^{t+1}-\theta^t) + \mathcal O(\eta^2)
    \end{align}
Using the definition of stochastic gradient ascent with a sufficiently small learning rate $\eta$, we have that:
\begin{align}
    \theta^{t+1}= \theta^t+\eta\nabla_\theta\mathcal J(\pitheta)
\end{align}
where $\nabla_\theta\mathcal J(\pitheta)=A(\x,\y)\nabla_\theta\log\pitheta(\y|\x)$ is the standard policy gradient objective \citep{qu2025metarl,ahmadian-rloo24}, we plug in the above objective in Eq.~\ref{eq:taylor_approx}:
\begin{align}
    \Delta\log\pithetat(\y'|\x') = \eta\mathcal K(\x,\x',\y,\y') A(\x,\y) + \mathcal O(\eta^2)
\end{align}
which concludes the proof. 
\end{proof}
\subsection{Interpretation of the relative strength of influence $||\Delta(\pi_{\theta^t}\mu)||$}
If we view the language model $\pitheta$ as a neural network with a problem and solution pair $\boldsymbol{o} = (\x,\y)$ as an input, where the output is the log-probability 
\begin{align}
    \log\pitheta(\y|\x) = \frac{1}{|\y|}\sum_{t=1}^{|\y|}\log\pitheta(y_t|\x,\y_{<t})
\end{align}
where $\y_{<t}$ is the partial completion before the token index $t$. We consider the gradient of $\pitheta$ with respect to parameter $\theta$ at $(\x,\y)$ as $\nabla_\theta\log\pitheta(\y |\x)\in\mathbb R^d$ where $d$ is the parameter dimension. We can define a matrix of gradient dot products $\mathcal{K}(\x,\x',\y,\y')=\mathcal K(\boldsymbol{o},\boldsymbol{o'})$ across all different prompt-response pair: 

\begin{equation}
\mathcal K = \begin{bmatrix}
 \mathcal{K}(\ex_1,\ex_1) & \mathcal{K}(\ex_1,\ex_2) & \cdots &  \\
\mathcal K(\ex_2,\ex_1) & \mathcal{K}(\ex_2,\ex_2) & \cdots &  \\
\vdots & \vdots & \ddots & 
\end{bmatrix}=\nabla_\theta\log\pitheta(\mathcal Y|\mathcal X)^\top\nabla_\theta\log\pitheta(\mathcal Y|\mathcal X)
\end{equation}

where $\nabla_\theta\log\pitheta(\mathcal Y|\mathcal X)=\left[\nabla_\theta\log\pitheta(\y_1|\x_1),\nabla_\theta\log\pitheta(\y_2|\x_2),\cdots\right]$ denotes the matrix of the gradient of different prompt-solution pairs $(\x,\y)$.

In an idealized scenario, the kernel matrix $\mathcal K$ is diagonal, meaning that all off-diagonal terms are zeros, i.e., $\mathcal{K}(\ex_i,\ex_j)=0$ for $i\neq j$. Under this assumption, improving performance on a given training problem-solution pair would not influence any other unseen problems. Consequently, there would be neither \textit{negative interference}, where training on one example degrades performance on others, nor \textit{coverage shrinkage}, where the set of solvable problems becomes narrower.

In practice, however, this assumption rarely holds for language models. Because parameters are shared across all examples, updates to one example inevitably affect others. This manifests as non-zero off-diagonal entries in $\mathcal K$, which capture the degree of cross-influence between examples.

The growing magnitude of the relative influence strength, $||\Delta(\pi_{\theta^t},\mu)||$, can thus be interpreted as evidence that the number and magnitude of these off-diagonal interactions are increasing. Specifically, this signals that the gradients of different examples are becoming more linearly dependent, since the rank of the kernel $\mathcal K$ is equivalent to the rank of the gradient matrix $\nabla_\theta \log \pi_\theta(\mathcal Y|\mathcal X)$. This growing entanglement explains why optimization on a subset of problems may inadvertently interfere with generalization to others.

\section{Analyzing learning dynamic of on-policy learning in toy bandit problem.}
\label{app:bandit}

We first consider a contextual bandit problem with a parametrized softmax policy with $V$ actions. where the input data $\x$ is a $d$ dimensional feature vector $\x\in\mathbb R^d$. The policy is defined as a linear layer with Softmax output:
\begin{align}
    \pitheta(y_i |\x)  = \frac{\exp(z_i(\x))}{\sum\limits_{j=1}^V\exp(z_j(\x))}
\end{align}
We consider the objective of REINFORCE and Off-policy maximum likelihood objectives for 2 actions $y_1$ and $y_2$ that receive a positive reward $(r(\x,y) = 1)$.
\begin{equation}
\mathcal J^{\text{REINFORCE}}(\pi_\theta) = \pitheta(\act_1|\x) + \pitheta(\act_2|\x), \quad \mathcal J^{\text{MLE}}(\pitheta) = \log\pitheta(\act_1|\x) + \log\pitheta(\act_2|\x) 
\label{eq:basic_setting}
\end{equation}
Using gradient descent, we can perform parameter update at each update step $t$ with a learning rate $\eta$:
\begin{align}
        \bmf\theta^{t+1} &=\bmf\theta^t+\eta\nabla_{\bmf\theta} \mathcal{J}^{\text{REINFORCE}} = \bmf{\theta}^t-\eta\x \left[(\pi_{\theta^t}(y_1)(\pi_{\theta^t}(\y)-\e(y_1))^\top
    + \pi_{\theta^t}(y_2)(\pi_{\theta^t}(\y) - \e(y_2))^\top\right]\nonumber \\ 
     \bmf\theta^{t+1} &=\bmf\theta^t+\eta\nabla_{\bmf\theta} \mathcal{J}^{\text{MLE}} = \bmf{\theta}^t-\eta\x  \left[((\pi_{\theta^t}(\y)-\e(\y_1)^\top
    + (\pi_{\theta^t}(\y) - \e(\y_2))^\top\right]
\label{eq:gd_softmax}
\end{align}
where $\pi_{\theta}(\y)\in\mathbb R^V$ is the vector represents probability distribution over all actions, $\e(y)$ is a one-hot vector indicates the target action. A key difference between the REINFORCE and MLE objectives is that, in REINFORCE, the probability of the sampled actions directly scales the update step. As a result, actions with higher likelihood receive larger updates, while low-likelihood actions lead to much smaller changes.  To analyze how the model’s probabilities in each action change, we define a ratio $\alpha_i=\frac{\pi^{t+1}_i}{\pi^{t_i}}$
and use
The following lemma describes its behavior:
\begin{lemma}
    \textit{The ratio of confidence change for each $i$ can be represented as:}
\begin{align}
    \alpha^{\text{MLE}}_i = \frac{\pi^{t+1}_i}{\pi^{t}_i}=\frac{\sum_{j=1}^Ve^{z^t_j}}{\sum_{j=1}^V\beta_je^{z^t_j}};\quad \alpha^{\text{REINFORCE}}_{i} = \frac{\sum_{j=1}^Ve^{z^t_j}}{\sum_{j=1}^V\gamma_je^{z^t_j}}
\end{align}

\textit{Note that the values of $\beta_j$ also depends on whether $i\in\{y_1,y_2\}$, hence for Case 1 ($i\in\{y_1,y_2\}$), and Case 2 ($i\notin \{y_1, y_2\}$)}:

\textit{Case 1}:
\begin{equation}
        \beta_j =
        \left\{
            \begin{aligned}
                &e^{-\eta' \left(2(\pitj - \piti +1\right)} & \text{if }j\notin \{y_1,y_2\} \\
                &\quad\quad\quad 1 &\text{if }j= y_1 \\
                &e^{-2\eta'(\pitj-\piti)}  &\text{if } j=y_2
            \end{aligned}
        \right.
        ;\quad
        \gamma_j =
        \left\{
            \begin{aligned}
                &e^{-\eta' \left(\Delta \pi^t(\pitj - \piti)+\piti\right)}    & \text{if }j\notin \{y_1,y_2\} \\
                &\quad\quad\quad 1 &\text{if }j= y_1 \\
                &e^{-\eta'\left((\Delta \pi^t-1)(\pitj - \piti\right)}  & \text{if }j= y_2
            \end{aligned}
        \right.     
        \label{eq:app:beta}
\end{equation}
\label{lemma1}
\textit{Case 2}:
\begin{equation}
        \beta_j =
        \left\{
            \begin{aligned}
                &e^{-2\eta' \left(\pitj -\piti\right)} & \text{if }j\notin \{y_1,y_2\} \\
                &e^{-\eta'\left(2(\pitj - \piti)-1\right)}  &\text{if } j\in\{y_1,y_2\}
            \end{aligned}
        \right.
        ;\quad
        \gamma_j =
        \left\{
            \begin{aligned}
                &e^{-\eta'\Delta \pi^t \left((\pitj - \piti)\right)}    & \text{if }j\notin \{y_1,y_2\} \\
            &e^{-\eta'\left(\Delta \pi^t(\pitj - \piti)-\pitj\right)}  & \text{if }j\in\{y_1,y_2\}
            \end{aligned}
        \right.     
        \label{eq:app:beta}
\end{equation}
where $\Delta\pi^t=\pi^t_{y_1}+\pi^t_{y_2},\eta'=\eta||\x||_2^2$.
\end{lemma}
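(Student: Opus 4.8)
The plan is to compute the softmax probability ratio $\alpha_i = \pi^{t+1}_i/\pi^t_i$ directly from the parameter update equations in Eq.~\eqref{eq:gd_softmax}, which give explicit expressions for $\bmf\theta^{t+1}$ in terms of $\bmf\theta^t$. First I would write the logit for action $i$ at step $t+1$ as $z_i^{t+1}(\x) = \langle \bmf\theta_i^{t+1}, \x\rangle$, and substitute the update rule so that $z_i^{t+1}$ becomes $z_i^t$ plus a correction term that is linear in $\eta\|\x\|_2^2$ (this is where $\eta'=\eta\|\x\|_2^2$ enters). For the MLE objective the correction to $z_i^t$ is $-\eta'\big[(\pi^t_i - \mathbf 1\{i=y_1\}) + (\pi^t_i - \mathbf 1\{i=y_2\})\big]$, and for REINFORCE it is the analogous expression with the extra probability scaling factors $\pi^t_{y_1}$ and $\pi^t_{y_2}$ multiplying the two bracketed terms.

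Next I would form the ratio. Since $\pi^{t+1}_i = e^{z_i^{t+1}}/\sum_j e^{z_j^{t+1}}$ and $\pi^t_i = e^{z_i^t}/\sum_j e^{z_j^t}$, the ratio is
\begin{align}
\alpha_i = \frac{e^{z_i^{t+1}-z_i^t}}{\sum_j e^{z_j^{t+1}-z_j^t}\,\frac{e^{z_j^t}}{\sum_k e^{z_k^t}} \cdot \frac{\sum_k e^{z_k^t}}{\dots}}
= \frac{e^{z_i^{t+1}-z_i^t}\sum_j e^{z_j^t}}{\sum_j e^{z_j^{t+1}-z_j^t}e^{z_j^t}}.
\end{align}
The key algebraic move is to factor $e^{z_i^{t+1}-z_i^t}$ out of the denominator sum, writing $e^{z_j^{t+1}-z_j^t}e^{z_j^t} = e^{z_i^{t+1}-z_i^t}\cdot e^{(z_j^{t+1}-z_j^t)-(z_i^{t+1}-z_i^t)}e^{z_j^t}$. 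Then $e^{z_i^{t+1}-z_i^t}$ cancels between numerator and denominator, leaving $\alpha_i = \big(\sum_j e^{z_j^t}\big)\big/\big(\sum_j \beta_j e^{z_j^t}\big)$ where $\beta_j := e^{(z_j^{t+1}-z_j^t)-(z_i^{t+1}-z_i^t)}$ is exactly the difference of the logit corrections for $j$ and for the reference index $i$. Computing this difference explicitly, and splitting into the cases $j=y_1$, $j=y_2$, and $j\notin\{y_1,y_2\}$ — together with the two outer cases $i\in\{y_1,y_2\}$ versus $i\notin\{y_1,y_2\}$ — yields the four piecewise formulas claimed for $\beta_j$ and $\gamma_j$; the $\gamma_j$ case just carries the extra $\pi^t_{y_1},\pi^t_{y_2}$ factors and produces the $\Delta\pi^t$ terms after collecting like terms.

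The only subtlety, and the main bookkeeping obstacle, is getting the case analysis and sign conventions exactly right: when $i$ is itself one of $y_1$ or $y_2$ the correction term $z_i^{t+1}-z_i^t$ contains a $+1$ from the one-hot term, so the subtracted reference shifts every $\beta_j$; conversely when $i$ is a ``spectator'' action, only the other actions $y_1,y_2$ carry the $+1$ shift. I would organize this by first writing the generic logit increment $\delta_j := z_j^{t+1}-z_j^t = -\eta'\big[2\pi^t_j - \mathbf 1\{j=y_1\} - \mathbf 1\{j=y_2\}\big]$ (MLE) and $\delta_j = -\eta'\big[\pi^t_{y_1}(\pi^t_j - \mathbf 1\{j=y_1\}) + \pi^t_{y_2}(\pi^t_j - \mathbf 1\{j=y_2\})\big]$ (REINFORCE), then mechanically evaluate $\beta_j = e^{\delta_j - \delta_i}$ in each sub-case, substituting $\Delta\pi^t = \pi^t_{y_1}+\pi^t_{y_2}$ where it appears. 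Everything else is routine exponent arithmetic, so no deep idea is required beyond the factor-and-cancel trick for the ratio; the proof is essentially a careful substitution plus the standard identity that shifting all softmax logits by a constant leaves the distribution unchanged, which is what lets us anchor the cancellation at the index $i$.
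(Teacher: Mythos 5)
Your proposal is correct and follows essentially the same route as the paper's own proof: both compute the logit increment $z_j^{t+1}-z_j^t$ from the linear update, substitute into the softmax, and factor out the increment at index $i$ so that the numerator collapses to $e^{z_i^t}$ and the denominator acquires the multiplicative factors $\beta_j$ (resp.\ $\gamma_j$). Your presentation is a bit cleaner in that you explicitly define $\delta_j := z_j^{t+1}-z_j^t$, observe $\beta_j = e^{\delta_j-\delta_i}$, and package the case analysis with indicator functions, whereas the paper carries out the same factoring directly inside the softmax expression in each sub-case; the underlying algebra is identical.
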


\begin{proof} We first need to link the logits vector $z^{t+1}$ and $z^t$. From Eq.\ref{eq:gd_softmax}, $z^{t+1}$ can be recursively written as:
\begin{align}
    z^{t+1} &= (\bmf\theta^{t+1})^\top \x\nonumber\\
    &= \left(\bmf \theta^t +\eta\x\nabla_z\mathcal J)\right)^\top\x \nonumber\\ 
    &= (\bmf\theta^t)^\top\x+\eta\left(\x\nabla_z \mathcal J\right)^\top\x \nonumber \\
    &= z^t  + \eta'\nabla_z\mathcal J
\end{align}

where $\eta' =\eta||\x||_2^2$. We can write down for each value in vector $z^{t+1}$ for MLE and REINFORCE objectives as:
\begin{align*}
        \text{MLE: }z^{t+1}_i &=
        \left\{
            \begin{aligned}
                &z_i^t - 2\eta'\piti&\text{if }i \notin\{y_1,y_2\} \\
                & z^t_i - 2\eta'\piti + \eta'
                & \text{if }i\in\{y_1,y_2\}
            \end{aligned}
        \right. \\
        \text{REINFORCE: } z^{t+1}_{i} &=
        \left\{
            \begin{aligned}
                &z^t - \eta'(\Delta \pi^t\cdot\piti) &\text{if }i \notin\{y_1,y_2\} \\
                &z^t - \eta'(\Delta \pi^t-1)\piti &\text{if }i\in\{y_1,y_2\} \\
            \end{aligned}
        \right.     
        \label{eq:app:beta}
    \end{align*}
Then, we can write down the probability change $\pi^{t+1}_i$ for each action $i$. For case $i\in\{y_1,y_2\}$, we have the following derivation for MLE objective:
\begin{align}
    \pi^t_{y_1} &= \frac{e^{z^{t+1}_i}}{\sum_{j=1}^Ve^{z^{t+1}_j}} = \frac{e^{z^t_i-2\eta'\pi^t_i+\eta'}}{\sum\limits_{j\notin\{y_1,y_2\}}e^{z^t_j-2\eta'\pi^t_j}+e^{z^t_{y_1}-2\eta'\pi^t_{y_1}+\eta'}+e^{z^t_{y_2}-2\eta'\pi^t_{y_2}+\eta'}} \\
    &=\frac{e^{z^t_i}}{\sum\limits_{j\notin\{y_1,y_2\}}e^{z^t_j\orange{-\eta'\left(2(\pitj-\piti)+1\right)}} + e^{z^t_{y_1}\orange{-0}}+e^{z^t_{y_2}\orange{-2\eta'(\pi^t_{y_2}-\pi^t_{i})}}}
\end{align}

Similarly, for REINFORCE objective,
\begin{align}
    \pi^t_{y_1} &= \frac{e^{z^{t+1}_i}}{\sum_{j=1}^Ve^{z^{t+1}_j}} = \frac{e^{z^t_i-\eta'(\Delta\pi^t-1)\pi^t_i}}{\sum\limits_{j\notin\{y_1,y_2\}}e^{z^t_j-\eta'(\Delta\pi^t\cdot\pi^t_j)}+e^{z^t_{y_1}-\eta'(\Delta\pi^t-1)\pi^t_{y_1}}+e^{z^t_{y_2}-\eta'(\Delta\pi^t-1)\pi^t_{y_2}}} \\
    &=\frac{e^{z^t_i}}{\sum\limits_{j\notin\{y_1,y_2\}}e^{z^t_j\orange{-\eta'\left(\Delta\pi^t(\pitj-\piti)+\pi^t_i\right)}} + e^{z^t_{y_1}\orange{-0}}+e^{z^t_{y_2}\orange{-\eta'\left((\Delta\pi^t-1)(\pi^t_{y_2}-\piti)\right)}}}
\end{align}

For case $i\notin\{y_1,y_2\}$, we have the following derivation for MLE objective:
\begin{align}
    \pi^t_{i} &= \frac{e^{z^{t+1}_i}}{\sum_{j=1}^Ve^{z^{t+1}_j}} = \frac{e^{z^t_i-\eta'(2\piti)}}{\sum\limits_{j\notin\{y_1,y_2\}}e^{z^t_j-2\eta'\pi^t_j}+e^{z^t_{y_1}-2\eta'\pi^t_{y_1}+\eta'}+e^{z^t_{y_2}-2\eta'\pi^t_{y_2}+\eta'}} \\
    &=\frac{e^{z^t_i}}{\sum\limits_{j\notin\{y_1,y_2\}}e^{z^t_j\orange{-2\eta'\left(\pitj-\piti\right)}} + e^{z^t_{y_1}\orange{-\eta'(2(\pi^t_{y_1}-\piti)-1)}}+e^{z^t_{y_2}\orange{-\eta'(2(\pi^t_{y_2}-\piti)-1)}}}
\end{align}
Similarly for REINFORCE objective:
\begin{align}
    \pi^t_{i} &= \frac{e^{z^{t+1}_i}}{\sum_{j=1}^Ve^{z^{t+1}_j}} = \frac{e^{z^t_i-\eta'(\Delta\pi^t\cdot\piti)}}{\sum\limits_{j\notin\{y_1,y_2\}}e^{z^t_j-\eta'(\Delta\pi^t\cdot\pi^t_j)}+e^{z^t_{y_1}-\eta'(\Delta\pi^t-1)\pi^t_{y_1}}+e^{z^t_{y_2}-\eta'(\Delta\pi^t-1)\pi^t_{y_2}}} \\
    &=\frac{e^{z^t_i}}{\sum\limits_{j\notin\{y_1,y_2\}}e^{z^t_j\orange{-\eta'\left(\Delta\pi^t(\pitj-\piti)\right)}} + e^{z^t_{y_1}\orange{-\eta'(\Delta\pi^t(\pi^t_{y_1}-\piti)-\pi^t_{y_1})}}+e^{z^t_{y_2}\orange{-\eta'(\Delta\pi^t(\pi^t_{y_2}-\piti)-\pi^t_{y_2}))}}}
\end{align}
\end{proof}
We can now better understand how each $\pi_i$ changes after one update. Specifically, if $\alpha_i > 1$, the corresponding $\pi_i$ increases, and vice versa. To determine the value of $\alpha_i$, we can treat any $\beta_j>1$ as contributing to the
conclusion that $\alpha_i<1$, while any $\beta_j<1$  against it. The value of the corresponding
$e^{z^t_j}$ and $\lvert \beta_{j}-1 \rvert$ controls how strong the contribution is.

\textbf{Low-likelihood tokens tend to receives less meaningful updates.} Using the log-derivative trick, $\nabla_\theta \pi_y = \pi_y \nabla_\theta \log \pi_y$, we see that REINFORCE inherently applies a conservative update rule: tokens with higher probability receive proportionally larger gradient updates, while low-probability tokens are updated much more weakly. This disproportionate updating can have a detrimental effect in cases where optimal actions may reside in low-probability regions of the distribution, yet REINFORCE fails to provide sufficient reinforcement for these tokens to be effectively explored and learned.

\textbf{Assuming $\pi^t_{y_1} >\pi^t_{y_2}$, then $\pi^t_{y_1}$ is guaranteed is increase, i.e., $\alpha_{y_1}>1$.}
To see this, consider the value of $\gamma$ in Case 1 of the REINFORCE objective. For any $j \notin {y_1, y_2}$, we have $\gamma_j < 1$ because
\begin{align}
\Delta\pi^t (\pitj - \piti) + \piti =\Delta\pi^t\pitj + (1-\Delta\pi^t)\piti>0.
\end{align}

Moreover, since $\gamma_{y_1} = 1$ while $\gamma_{y_2} < 1$, the probability of the highest-likelihood action $y_1$ will increase. In addition, because the update step size depends on the probability of the sampled action, the smaller $\pi^t_{y_1}$ is, the more strongly $\pi^t_{y_2}$ will decrease, leading the model to gradually concentrate its mass on the dominant mode $\pi^t_{y_1}$. However, this is not the case for the maximum likelihood objective; later, as we show in section \ref{}, the maximum likelihood objective will tend to increase the lower probability $\pi^t_{y_2}$, avoiding model collapse.
\subsection{Empirical Verification with a simple bandit problem.}
To empirically verify the analysis above, we analyze using a simple logistic regression task following \citet{ren2025learningdynamics}. We set the vocabulary size $V=50$, with parameter dimension $d=5$, a learning rate of $\eta=0.5$, and a randomly generated input vector $\x$. We compared the update effect of MLE and REINFORCE objectives in 2 scenarios: when the initial distribution $\pi_{\theta^0}$ is relatively uniform and when it's highly peaked around certain dimensions.
\begin{figure}[!h]
    \centering
    \includegraphics[width=1.0\linewidth]{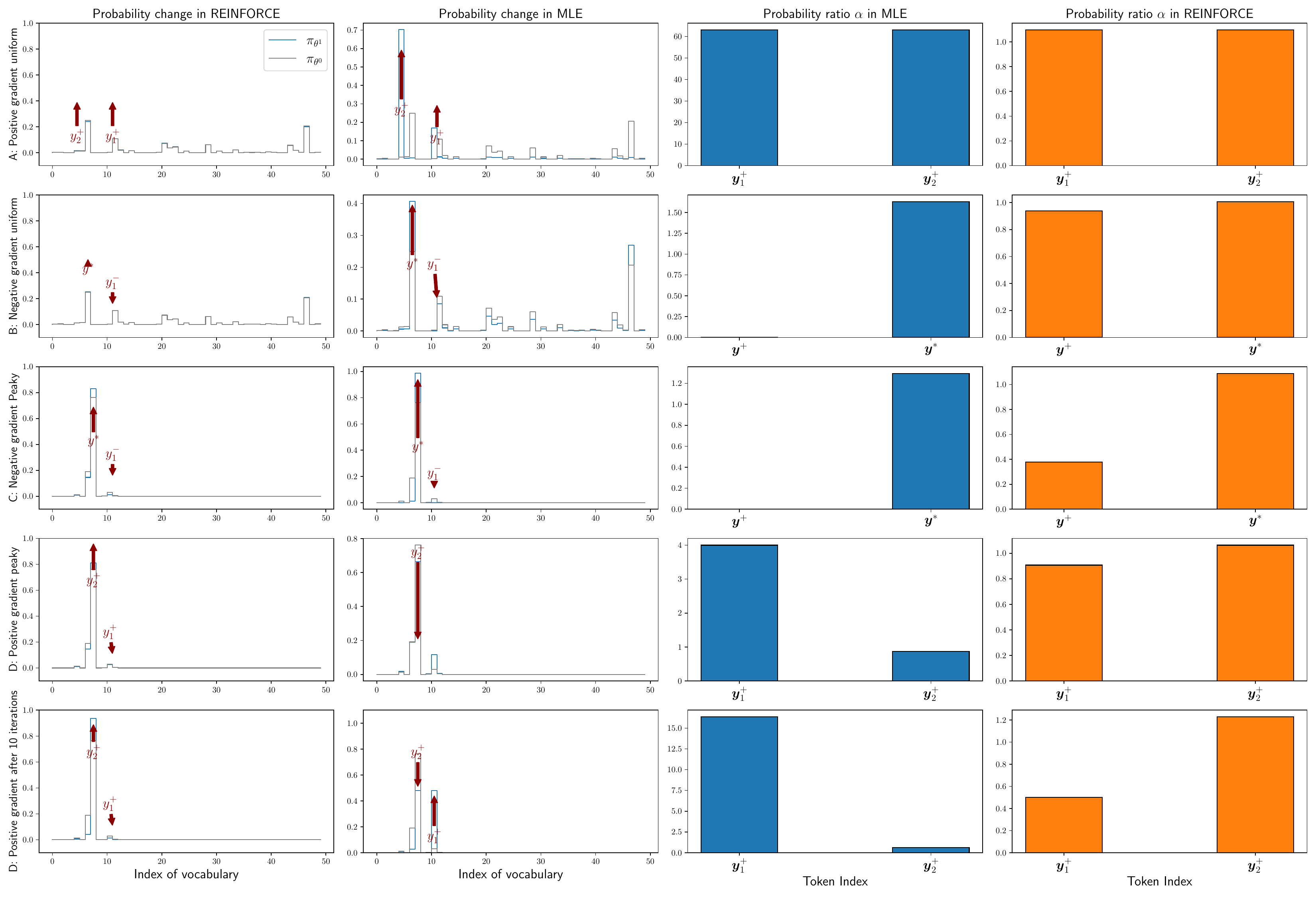}
    \caption{Experimental verification and comparison between REINFORCE and MLE objectives using a simple contextual bandit problem.}
    \label{fig:interference-toy-data}
\end{figure}

\textbf{Uniform case:} we consider negative and positive gradient effect, in the negative gradient case, we consider negative label $\y^- =11$ and investigate the "indirect" push-up effect on $\y_2=5$. In the positive gradient scenario, we use 2 positive actions $\y_1=11, \y_2=5$ for the SGD update. As demonstrated in the first 2 rows in Fig.~\ref{fig:interference-toy-data}, both REINFORCE and MLE behave similarly in both negative and positive gradient scenarios, where both $\y_1,\y_2$ are increased. However, MLE aggressively imposes a large update, where the probability is approximately 60 times larger on these two tokens after the update, as measured by the probability ratio $\alpha$. Similarly, a negative gradient tends to increase the highest-likelihood token $\y^*$, while decreasing other actions, including the positive action $\y_2$. This can be explained by REINFORCE by imposing a probability step-size, resulting in a more conservative update, where the models less deviate from their previous step compared to the MLE objective.

\textbf{Peaky case:} In LLM regime, the LMs usually initialize from a pre-trained checkpoint, where the model exhibits a non-uniform distribution. We consider the scenario where the models concentrate on a few dimensions in action distribution. Similarly, negative label $\y^- =11$ and investigate the "indirect" push-up effect on $\y_2=8$. In the positive gradient scenario, we use 2 positive actions $\y_1=11, \y_2=8$ for the SGD update. As shown in the last 3 rows in Fig.~\ref{fig:interference-toy-data}, when the distribution is peaky, the negative gradient exacerbates the effect of increasing the highest probability token while decreasing other actions, regardless of correctness \citet{ren2025learningdynamics}. Interestingly, in a positive gradient scenario, between the 2 positive actions, MLE tends to push up the actions with the lowest likelihood (as measured by $\alpha$). In the last row, we observe that after 10 iterations of training, MLE can converge to a new distribution with equal probabilities between 2 actions. However, REINFORCE shows favor for the actions with the highest probability; this effect will exacerbate after multiple updates, where it collapses the highest probability $\pi(\y_1=8)$. This suggests that offline learning can still provide benefits compared to REINFORCE in avoiding model collapse.
\section{Additional Results on different models}

\begin{figure}[!h]
    \centering
    Qwen2.5-Math-1.5B\par
    \includegraphics[width=1.0\linewidth]{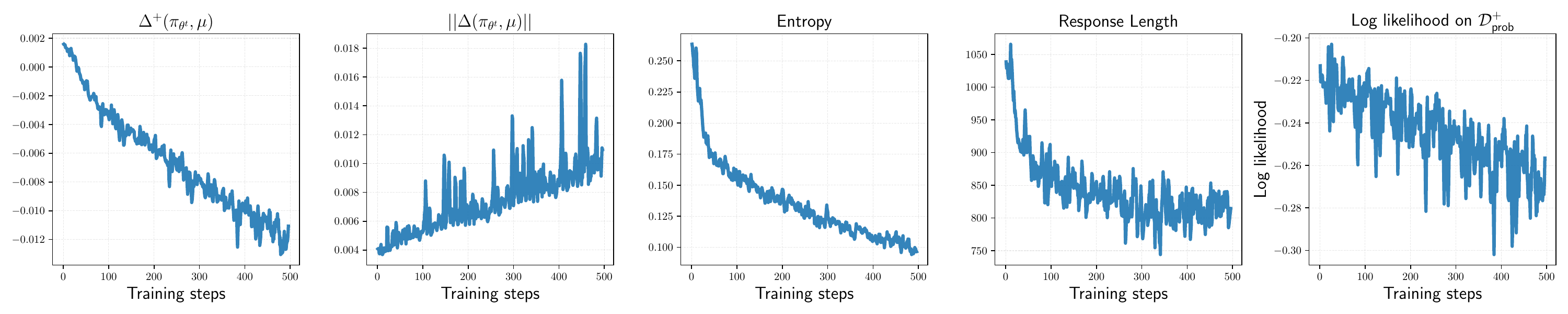}
    Qwen2.5-Math-7B\par
    \includegraphics[width=1.0\linewidth]{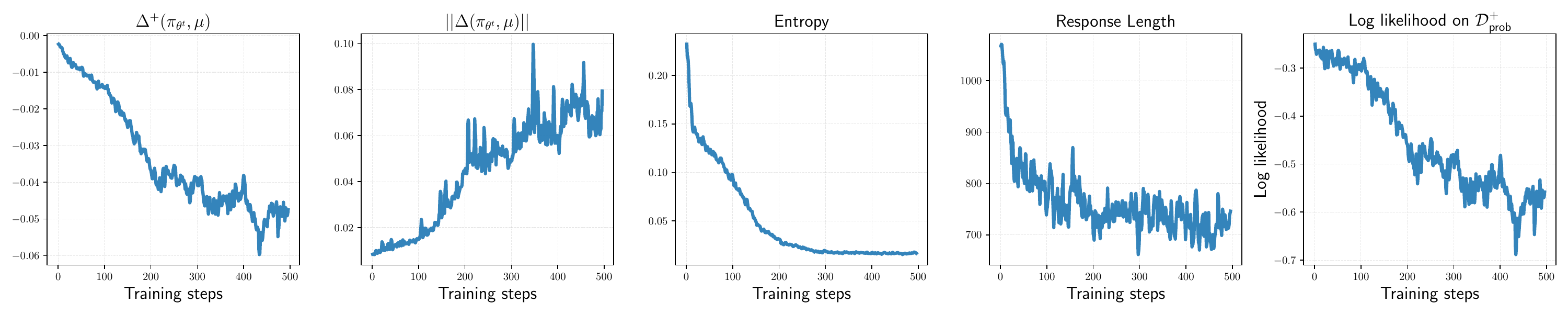}
    Llama-3.2-3B-Instruct\par
    \includegraphics[width=1.0\linewidth]{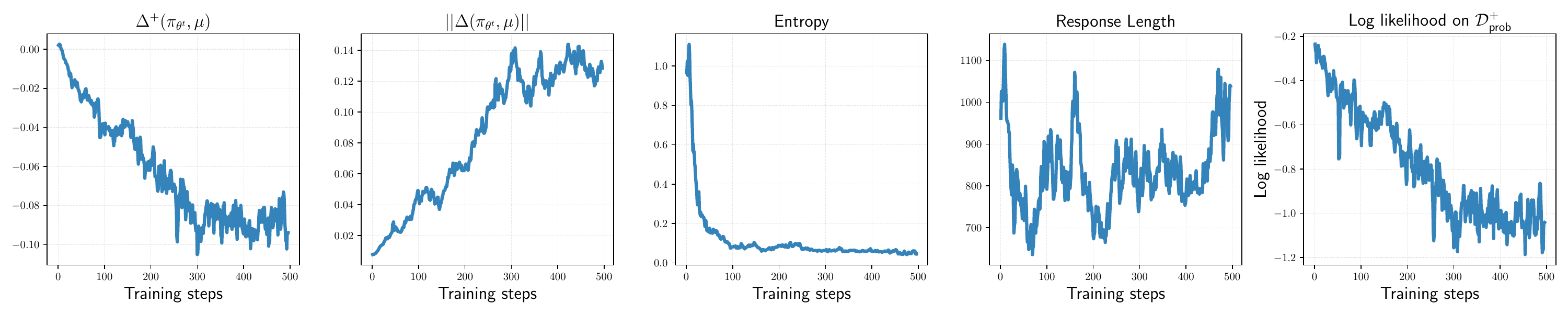}
    \caption{Learning dynamics of RLVR model with different models.}
    \label{fig:app_learning_dynamic}
\end{figure}

\textbf{Consistent learning dynamics across different models.}
In this section, we provide additional results that our findings hold across different model families and sizes. Specifically, we conduct GRPO experiments on a larger model, Qwen2.5-Math-7B, as well as a different family, Llama-3.2-3B-Instruct. As shown in Fig.~\ref{fig:app_learning_dynamic}, our analysis consistently holds across these settings.

\textbf{Larger models tend to suffer more from\textit{ negative interference}}.
Interestingly, we find that the negative interference phenomenon becomes more severe when scaling up to larger models such as Qwen2.5-Math-7B. This also helps explain why, under RLVR training, the Pass@$k$ performance of Qwen2.5-Math-7B with large sampling budgets $k$ can even underperform than the smaller Qwen2.5-Math-1.5B after GRPO.

\textbf{Detailed Pass@$k$ performance.} We present detailed Pass@$k$ curves for Qwen2.5-Math-1.5B and Qwen2.5-Math-7B of different methods below:
\newpage
\begin{figure}[!h]
    \centering
    Qwen2.5-Math-1.5B\par
    \includegraphics[width=1.0\linewidth]{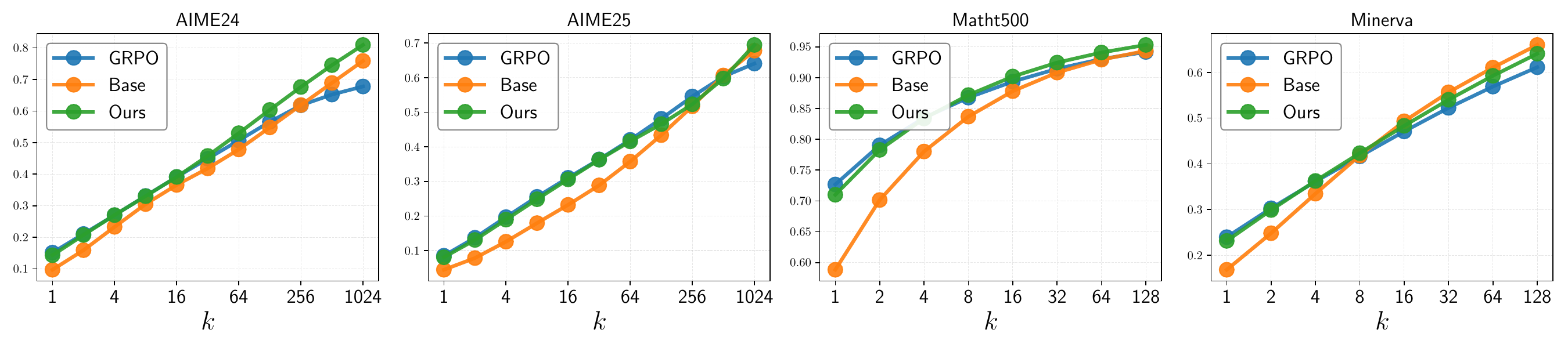}
    Qwen2.5-Math-7B\par
    \includegraphics[width=1.0\linewidth]{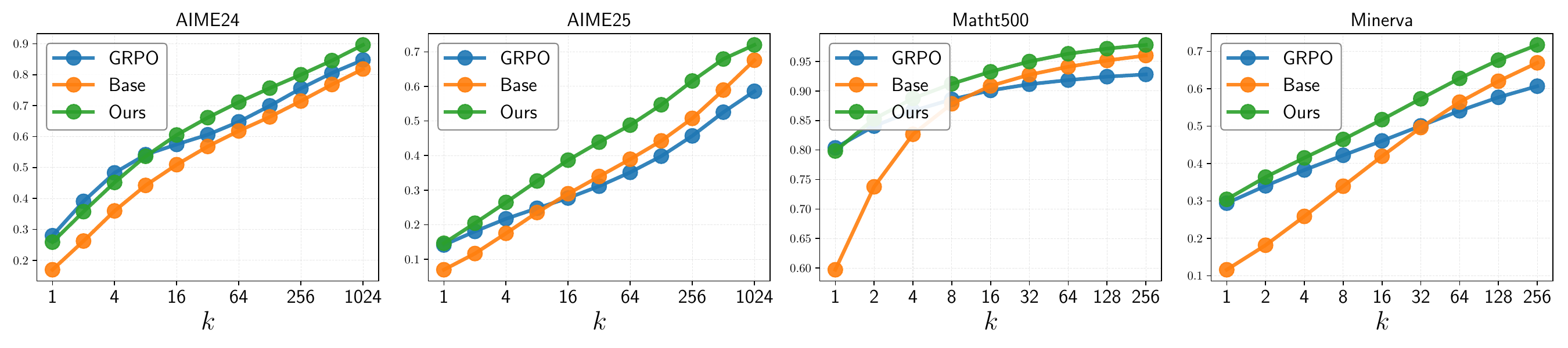}
    Llama-3.2-3B-Instruct\par
    \includegraphics[width=1.0\linewidth]{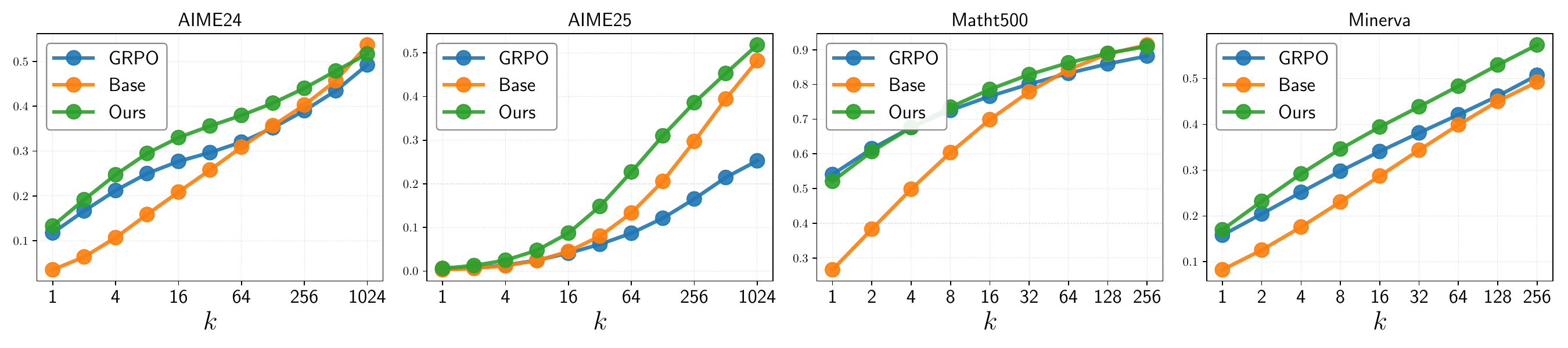}
    \caption{Pass@$k$ curves for base model, RLVR model and \ours{} with different models.}
    \label{fig:app_pass_k_curve}
\end{figure}
\section{Performance degradation in W-REINFORCE}
\label{app:w_reinforce}
We also experiment with recent work \textbf{W-REINFORCE} \citet{zhu2025negativereinforcement}, which upweights the negative gradient signal to encourage diversity. As shown in Fig.~\ref{}, we find that \textbf{W-REINFORCE} suffers significantly from the catastrophic forgetting issue, where the average accuracy decreases even before 300 steps.
\begin{figure}[!h]
    \centering
    \includegraphics[width=1.0\linewidth]{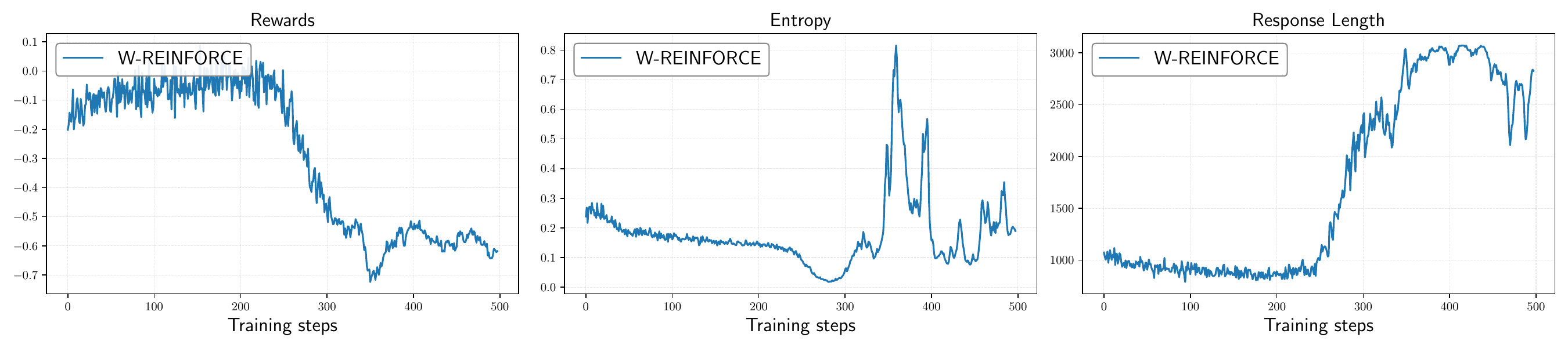}
    \caption{Learning dynamics of \textbf{W-REINFORCE}, we observe that \textbf{W-REINFORCE} suffer from training instability under prolong training.}
    \label{fig:w_reinforce_results}
\end{figure}
We hypothesize that it is due to the high variance of the REINFORCE objective. Furthermore, progressively learning with a negative gradient can result in improper increases in tail probabilities \citet{li2025germ}, leading the models to generate non-meaningful tokens.

\section{Computational cost of \ours{}}
\label{app:computational_cost}
Our method require an additional greedy generation to filter highly solvable problems. To analyze the computational cost of \ours{} and GRPO, we utilize per-iteration time of each update step with 3 factors: total generation time $T_{\text{gen}}$, backward time $T_{\text{backward}}$, and forward time $T_{\text{forward}}$. We provide our results in Fig.~\ref{fig:computational_cost}
\begin{figure}[!h]
    \centering
    \includegraphics[width=1.0\linewidth]{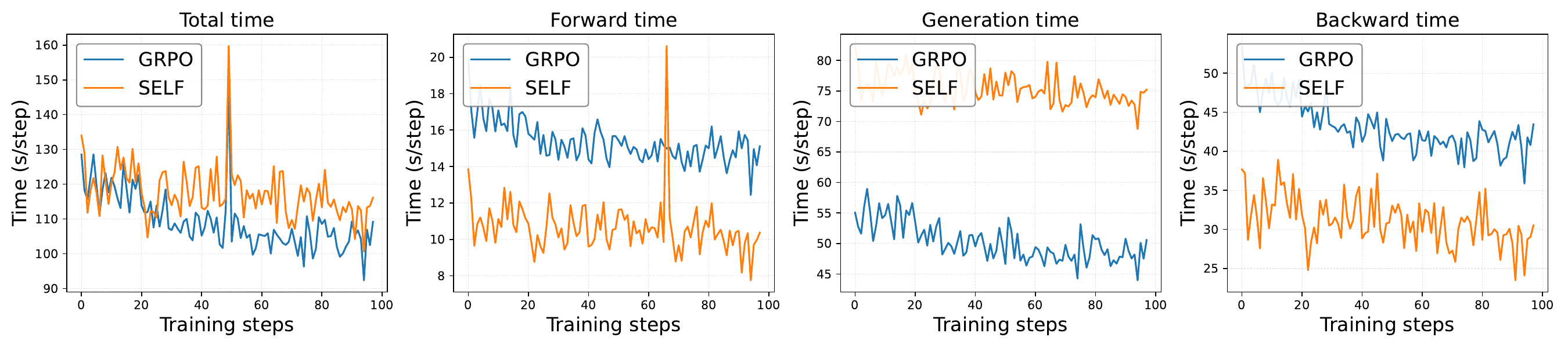}
    \caption{Computational results for training \texttt{Qwen2.5-Math-1.5B} for 100 steps.}
    \label{fig:computational_cost}
\end{figure}
We find that \ours{} incurs an additional cost in generation time compared to GRPO. However, due to the filtering mechanism, both the forward and backward passes are faster, which compensates for the extra generation time and results in a negligible overall computational cost relative to GRPO.

\section{Perplexity Detail Analysis}
\label{app:ppl_detail}
In this section, we provide a detailed perplexity analysis. Across all benchmarks, we randomly sample 16 reasoning traces for each problem in the test dataset to calculate perplexity.

To collect problems that increased after RLVR $\mathcal D^{\uparrow}$, we estimate the average accuracy for each problem and sort the problem indices according to average accuracy changes between the base policy and RLVR policy:
\begin{align}
    \Delta r(\x) = \mathbb E_{\y\sim\pitheta(\y|\x)}\left[r(\x,\y)\right] - \mathbb E_{\y\sim\pi_b(\y|\x)}\left[r(\x,\y)\right]
\end{align}
For AIME 2024/2025, we sample the top-$3$ problems that increased the most for $\mathcal D^{\uparrow}$ and top-$3$ problems decreased for $\mathcal{D}^-$, as these benchmarks only consist of $30$ problems. For Math500 and Minerva, we sample top-64 problems for both $\mathcal D^+$ and $\mathcal D^{\downarrow}$. We present out results in Fig.~\ref{fig:app_ppl_analysis}.
\begin{figure}[!h]
    \centering
    \includegraphics[width=1.0\linewidth]{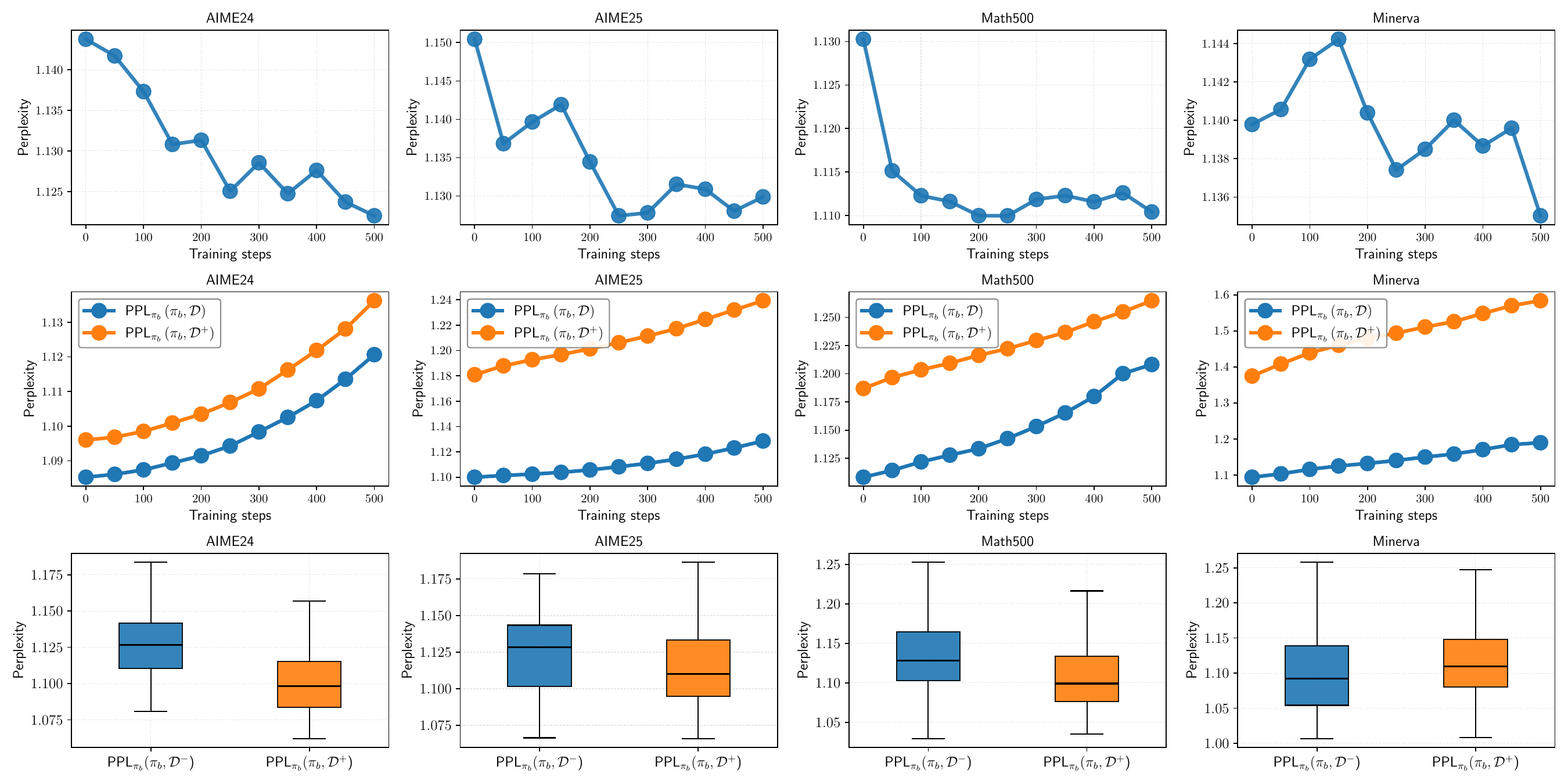}
    \caption{Dynamics of Perplexity during RLVR. The first row shows that RLVR increasingly generates low-perplexity responses under the base model. The second row demonstrates that the initial base solutions, regardless of correctness, exhibit increasing perplexity; the LMs are less likely to generate initial, diverse solutions. The final row shows that for problems where the correct solution already has a high probability to generate under base model, RLVR will incentivize, improving these problems, while problems with low success rate exhibits performance degradation.}
    \label{fig:app_ppl_analysis}
\end{figure}
\section{Additional details in reducing previously learned behaviors in RLVR.}
\label{app:diversity_reduced}
Similar to \citet{shao2025spuriousrewardsrethinkingtraining}, we define \textit{code reasoning} are responses that contain the string \texttt{python}. For each prompt $\x$, the LMs policy can generate either \textit{code reasoning traces} or \textit{lang reasoning traces}. We can define a mixture of distribuion between these two reasoning traces:

\begin{align}
    \pi_b(\y|\x) = \alpha(\x)\pi_{\text{code}}(\y|\x) + (1-\alpha(\x))\pi_{\text{lang}}(\y|\x)
\end{align}
where $\alpha(\x)$ is the mixture coefficient that depends on the prompt. For each prompt $\x$, we sample $k=144$ responses to estimate the average accuracy of the mixture distributions $\pi_{\text{code}}$ and $\pi_{\text{lang}}$, denoted as $\rho_{\text{code}}(\x)$ and $\rho_{\text{lang}}(\x)$. We then define their average accuracies on the test set $\mathcal D_{\text{test}}$ as:
\begin{align}
\mathbb{E}_{\x \sim \mathcal{D}{\text{test}}}\left[\rho_{\text{code}}(\x)\right], \quad
\mathbb{E}_{\x \sim \mathcal{D}{\text{test}}}\left[\rho_{\text{lang}}(\x)\right].
\end{align}

We report results for each reasoning behavior in the \texttt{Qwen2.5-Math} models across all benchmarks below:
\begin{figure}[!h]
    \centering
    \includegraphics[width=1.0\linewidth]{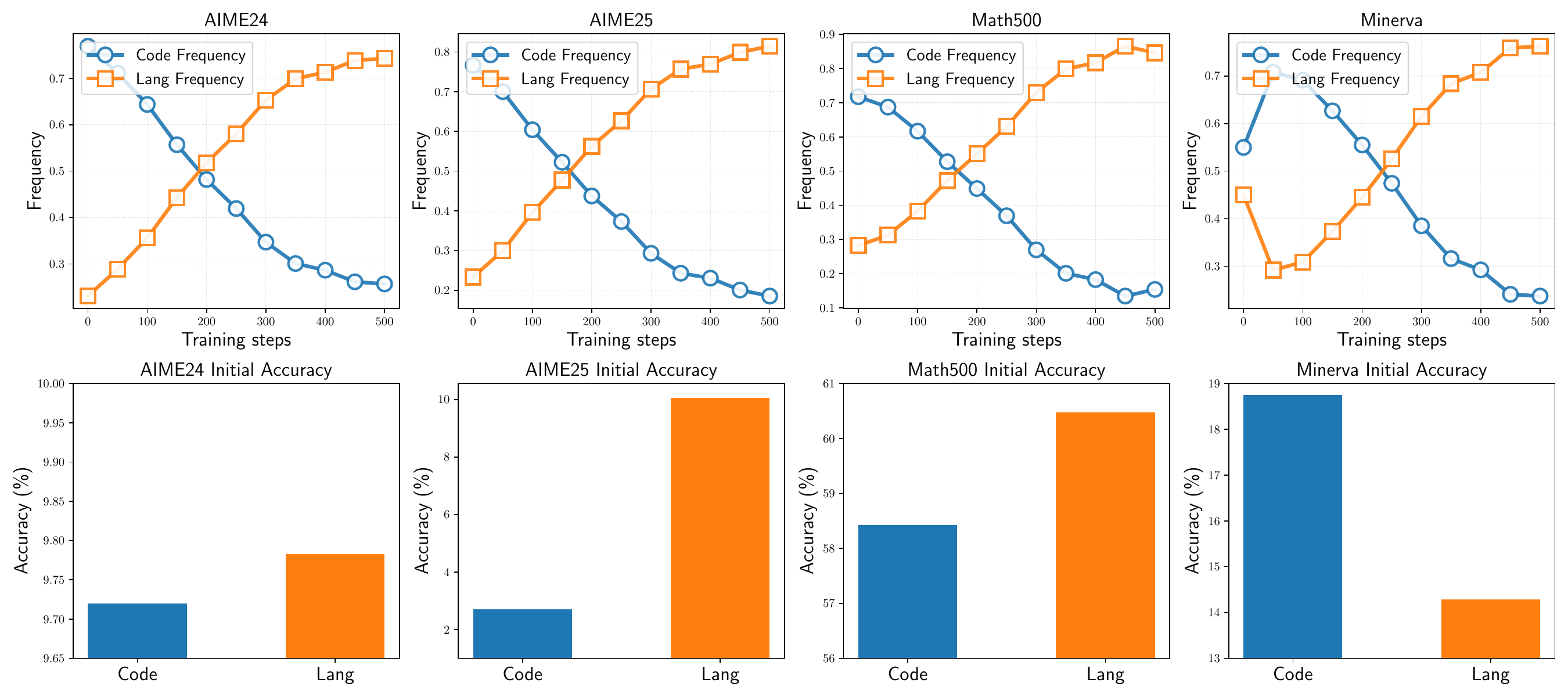}   
    \caption{Reasoning strategy switching from \texttt{code} to \texttt{lang} throughout training process. While AIME24/25 and Math500 exhibits initial better accuracy in \texttt{lang} than \texttt{code}, Minerva shows better performance in \textit{code reasoning traces}.}
    \label{fig:per_ds_code_lang_freqs}
\end{figure}

We observe that the models eventually switch almost entirely to \textit{lang reasoning traces}. We hypothesize that this shift explains the improvements on AIME24/25 and Math500, where \textit{lang reasoning traces} is more effective initially. However, the loss of \textit{code reasoning traces} reduces performance on Minerva, where the base model originally benefited from producing \textit{code reasoning traces}.

\section{Usage of Large Language Models (LLMs)}
We used ChatGPT solely for revising the writing of the paper. Note that revision here strictly means enhancing the clarity and readability of the text (e.g., fixing typos or constructing latex tables), and not for any other purposes.

\end{document}